\documentclass[preprint,3p,times,review]{elsarticle}

\usepackage{amssymb}
\usepackage{amsmath}

\usepackage{graphicx}
\usepackage{amsmath}
\usepackage{booktabs}
\usepackage{algorithm}
\usepackage{algorithmic}
\usepackage{url}
\usepackage{multirow}
\usepackage{graphicx}
\usepackage{epstopdf}
\usepackage{booktabs}
\usepackage{amsthm}

\newtheorem{lemma}{Lemma}

\newtheorem{theorem}{Theorem}

\usepackage{enumitem}
\usepackage{array}
\usepackage{subfigure}
\usepackage{color}
\usepackage{xcolor}
\usepackage{caption}

\journal{Expert Systems With Applications}

\begin{document}

\begin{frontmatter}

\title{Learning Unified Distance Metric for Heterogeneous Attribute Data Clustering} 

\author[gdut]{Yiqun Zhang} \ead{yqzhang@gdut.edu.cn}  
\author[gdut]{Mingjie Zhao} \ead{zmingjie775@gmail.com}
\author[xmu]{Yizhou Chen}\ead{imboaba@gmail.com}
\author[xmu]{Yang Lu\corref{cor4}} \ead{luyang@xmu.edu.cn}
\author[hkbu]{Yiu-ming Cheung} \ead{ymc@comp.hkbu.edu.hk}  

\cortext[cor4]{Corresponding author}

\affiliation[gdut]{
    organization={School of Computer Science and Technology},
    addressline={Guangdong University of Technology}, 
    city={Guangzhou},
    postcode={510006}, 
    state={Guangdong},
    country={China}
}
\affiliation[xmu]{
    organization={Key Laboratory of Multimedia Trusted Perception and Efficient Computing, Ministry of Education of China},
    addressline={Xiamen University}, 
    city={Xiamen},
    postcode={361102}, 
    state={Fujian},
    country={China}
}


\affiliation[hkbu]{
    organization={Department of Computer Science},
    addressline={Hong Kong Baptist University}, 
    state={Hong Kong SAR},
    country={China}
}


\begin{abstract}
Datasets composed of numerical and categorical attributes (also called mixed data hereinafter) are common in real clustering tasks. Differing from numerical attributes that indicate tendencies between two concepts (e.g., high and low temperature) with their values in well-defined Euclidean distance space, categorical attribute values are different concepts (e.g., different occupations) embedded in an implicit space. Simultaneously exploiting these two very different types of information is an unavoidable but challenging problem, and most advanced attempts either encode the heterogeneous numerical and categorical attributes into one type, or define a unified metric for them for mixed data clustering, leaving their inherent connection unrevealed. This paper, therefore, studies the connection among any-type of attributes and proposes a {novel} Heterogeneous Attribute Reconstruction and Representation (HARR) learning paradigm accordingly for cluster analysis. The paradigm transforms heterogeneous attributes into a homogeneous status for distance metric learning, and integrates the learning with clustering to automatically adapt the metric to different clustering tasks. {Differing from most existing works that directly adopt defined distance metrics or learn attribute weights to search clusters in a subspace. We propose to project the values of each attribute into unified learnable multiple spaces to more finely represent and learn the distance metric for categorical data.} HARR is parameter-free, convergence-guaranteed, and can more effectively self-adapt to different sought number of clusters $k$. Extensive experiments illustrate its superiority in terms of accuracy and efficiency.

\end{abstract}

\begin{highlights}
\item This research introduces a novel perspective on linking numerical, nominal, and ordinal attributes by exploring the intrinsic semantic concepts they represent, enhancing the understanding of heterogeneous attributes in mixed datasets.
\item A new projection-based method is proposed to transform heterogeneous distance spaces of various attributes into homogeneous ones, providing a robust basis for mixed data analysis.
\item Two algorithms that operate beyond traditional hyper-parameter tuning were proposed, enabling cluster searches in attribute subspaces and increasing learning flexibility.
\end{highlights}

\begin{keyword}
Mixed data clustering \sep 
heterogeneous attribute \sep 
distance structure reconstruction \sep
learnable weighting.

\end{keyword}

\end{frontmatter}

\section{Introduction}
\label{sec1}

Mixed datasets comprising numerical and categorical attributes are widespread in real cluster analysis tasks \cite{intro5}. Differing from the numerical attributes with precisely quantified values, categorical attributes are usually with a certain number of qualitative values representing some fuzzy concepts \cite{intro1}, which are embedded in an implicit distance space. Categorical attributes can be further distinguished into two sub-types, i.e., nominal and ordinal attributes, where ordinal attribute values are with additional intrinsic order information \cite{ex8}. The conventional taxonomy of attributes is shown in Figure~\ref{fig:att_type}. As the distance spaces of categorical attributes are implicit, how to combine them with the explicit distance spaces of numerical attributes is a key issue for mixed data clustering \cite{intro10}. {Existing efforts can be roughly categorized into two streams: (i) Encode the qualitative categorical attribute values into numerical ones for clustering; (ii) Define a new hybrid distance measure or metric for clustering \cite{intro6}.}

\begin{figure}[H]	
\centerline{\includegraphics[width=3.0in]{ 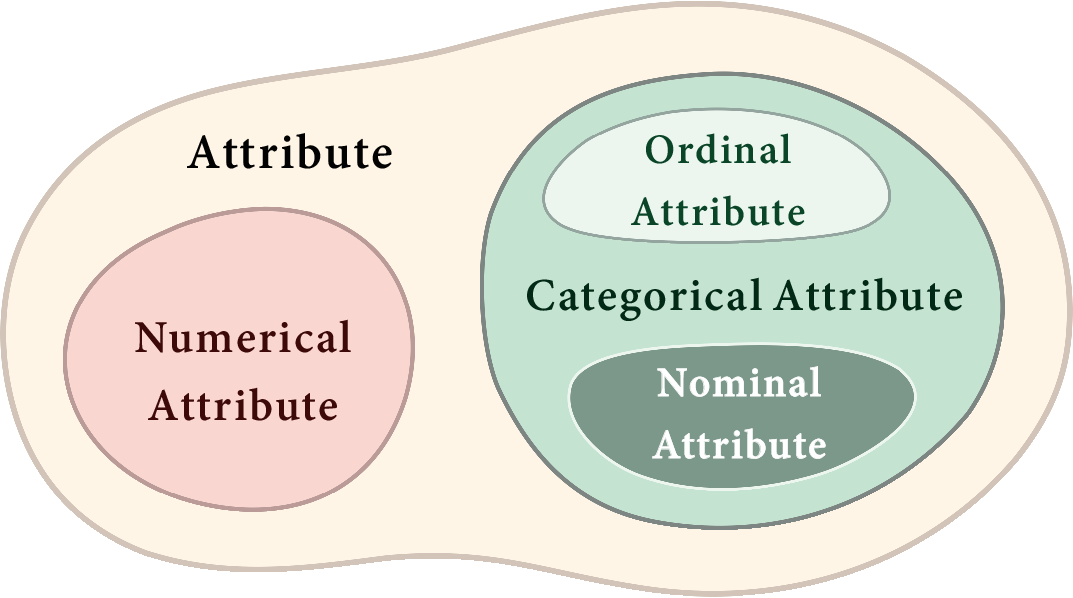}}
\caption{{Conventional taxonomy of attributes, which shows the traditional classification of attributes into Numerical and Categorical types. Categorical Attributes are further divided into Ordinal Attributes, with inherent order (e.g., small, medium, large), and Nominal Attributes, without natural order (e.g., colors or labels). This taxonomy emphasizes the distinct characteristics of each attribute type.} }	
\label{fig:att_type}	
\end{figure}

{For the encoding-based clustering, the popular and simple one-hot encoding converts a possible value of a categorical attribute into a new numerical attribute, with ``1'' representing this possible value and ``0'' representing the others.} In recent years, more informative encoding methods \cite{sbc}\cite{cde} have been proposed for extracting and embedding more statistical information including occurrence frequencies of possible values, co-occurrence frequencies of possible values from dependent attributes, etc., to achieve better representations. Recently, the powerful encoding method \cite{untie} further introduces multiple kernels to comprehensively encode the value-level and attribute-level coupling relationships of categorical attributes. The above-mentioned encoding results are usually fed to the existing clustering algorithms, e.g., the k-means algorithm \cite{kms} and its variants \cite{wkm}\cite{ewkm}\cite{inikms}, for clustering. However, since they are designed for purely categorical data only, the relationship information between numerical and categorical attributes in mixed data will be somewhat omitted by using them.

For the approaches based on dissimilarity defining, Hamming distance metric \cite{hdm} that only simply distinguishes the distance between identical and different categorical values is the most common. To more finely distinguish dissimilarity degrees of different values, some measures \cite{gsm}\cite{lsm}\cite{ebdmjournal} have been proposed based on the occurrence probabilities of possible values. There are also measures \cite{cbdm_journal}\cite{jdm}\cite{cms}\cite{udm} that take the interdependence of categorical attributes into account further. Most recently, more advanced dissimilarity learning methods \cite{dlc}\cite{hd-ndw} have been proposed, which not only reasonably define the dissimilarities, but also enable the adaptive learning of dissimilarities during clustering. As all the above-mentioned solutions are designed for categorical attributes only, the defined dissimilarities are usually combined with Euclidean distances of numerical attributes by the k-prototypes algorithm \cite{kpt} for mixed data clustering. In the literature, some measures \cite{oc}\cite{woc}\cite{Gower} also attempt to directly quantify the object-cluster dissimilarities for mixed data.

Although existing works provide a variety of solutions, they may still suffer from at least one of the following two intractable issues: {(i) Adaptability: As the encoding or dissimilarity defining phase is independent of the clustering phase, the encoded data or the dissimilarities may not suit certain clustering tasks well. How to adapt the representation or the defined dissimilarities of mixed data to the clustering task is a key issue to enhance the scalability and accuracy of the clustering approach; (ii) Homogeneity: Numerical attribute values finely grade the tendencies between two concepts (e.g., \{high, low\} of attribute ``temperature''), while categorical attribute values usually represent multiple concepts (e.g., \{driver, lawyer, nurse, ...\} of attribute ``occupation'') in a coarse-grained manner.} To tackle these issues, a mixed data clustering method \cite{het2hom} has been proposed, which first reconstructs categorical attributes into a form similar to that of numerical attributes, and then uniformly learns the weights of all the attributes. However, it overlooks the potential ordinal relationship of categorical attribute values and only learns the importance of each whole attribute without finely exploring the distance structure among attribute values.


This paper, therefore, studies the differences and connections among numerical, nominal, and ordinal attributes. {For all heterogeneous attributes to provide comparable distances for clustering, we reconstruct each categorical attribute that is originally arranged in a nonlinear space into a set of possible one-dimensional spaces that are homogeneous to the distance space of the numerical attribute. In comparison with the mainstream approaches that rely on external knowledge or hypothesized distributions for categorical attribute distance space representation like Gower's distance~\cite{Gower}, our reconstruction is only based on the basic data statistics to avoid priori bias.} It turns out that a categorical attribute can be adequately represented from different views of its value pairs, and represented attributes are with distance spaces like that of numerical attributes. {Such design provides a homogeneous basis for jointly considering heterogeneous attributes, and we further arrange learnable weights to the distances among reconstructed categorical values to leverage adaptive learning w.r.t. clustering.}
The main contributions of this paper are four-fold:
\begin{itemize}
  \item {Connection among numerical, nominal, and ordinal attributes is revealed from a new perspective of the intrinsic semantic concepts represented by the attribute values, which offers the potential to jointly understand the heterogeneous attributes in mixed datasets.}
  \item {A projection-based attribute reconstruction method is proposed to informatively convert the heterogeneous distance spaces of different types of attributes to be homogeneous, which provides an effective data basis for the processing and analysis of mixed data.}
  \item The adaptation of reconstructed representation to the data objects clustering is designed into a learnable task. {As all types of attributes can be homogeneously and interactively represented by learning, such clustering paradigm is suitable for any-type-attributed data}\footnote{Any-type-attributed data is a dataset composed of any combination of numerical, nominal, and ordinal attributes, which is also called any-type data interchangeably in this paper.}.
  \item Two learning algorithms are instantiated from the learning paradigm, which circumvents the hyper-parameters for learning, and searches clusters in attribute subspaces to improve the degree of learning freedom, respectively. 
\end{itemize}

The rest of this paper is organized as follows. Section~\ref{sct:related} provides an overview of the {related works}. In Section~\ref{sct:proposed}, the proposed representation method and the learning algorithms are presented. Section~\ref{sct:theory} presents theoretical analyses. Section~\ref{sct:experiments} conducts experimental evaluation. Section~\ref{sct:conclusion} draws a conclusion of this paper.

\section{Related Work}
\label{sct:related}

This section provides an overview of the closely related research, i.e., data encoding approaches and dissimilarity measures, for the clustering of categorical and mixed data.

Encoding approaches have two common procedures: (1) encode the target dataset according to a certain strategy, (2) conduct cluster analysis by treating the encoded dataset as numerical-valued data. In this stream, one-hot encoding is the simplest but commonly used approach, which converts the $s$th possible value $o^r_s$ of attribute $a^r$ into a $v^r$-bit vector with the $s$th bit set at ``1'' and the remainders set at ``0''. Here, $v^r$ indicates the total number of possible values of attribute $a^r$. Since such encoding process is equivalent to uniformly assigning distance ``1'' to any pair of different values, one-hot encoding is infeasible to produce different dissimilarity degrees \cite{intro10}\cite{intro6}, and may thus cause information loss. Space structure-based encoding proposed in \cite{sbc} encodes a target data object $\textbf{x}_i$ by concatenating the distances from $\textbf{x}_i$ to all the $n$ data objects into an $n$-dimensional vector. Later, coupling-based representation \cite{cde}\cite{cde_conf} has been proposed considering the relationship among categorical attributes for encoding. Most recently, a more advanced method \cite{untie} adopting different kernels for more comprehensive relationship representation has been proposed. As all the above-mentioned approaches are designed for categorical attributes only, some approaches \cite{mai}\cite{mix2vec} have also been designed to simultaneously encode the numerical and categorical attributes with considering their dependency. However, sensitivity to the hyper-parameters may still influence their performance, and the potential order relationship information of categorical attributes remains to be exploited.

For the approaches that directly define the dissimilarities, Hamming distance is the most popular metric. It always assigns distance ``1'' to two different values and set distance at ``0'' for identical values. Similar to the one-hot encoding, the boolean distance values prevent it from distinguishing different dissimilarity degrees. Accordingly, similarity measures based on probability \cite{gsm}\cite{oc}\cite{ML23KBS1} have been proposed, which can more finely define similarities according to the occurrence frequencies of the possible values. Entropy-based similarity measures \cite{lsm}\cite{ebdmjournal}\cite{ebdmconf}\cite{KHAN23KBS5} proposed to consider distance measurement from the aspect of information theory \cite{intro9}. They also compute similarities based on the occurrence frequencies of possible values, which take a common idea that two more dissimilar values may result in a higher information amount reflected by a larger entropy value. Since they all treat attributes independently and ignore the valuable information provided by the inter-attribute dependence \cite{intro10}, some measures {\cite{cbdm_journal}\cite{adm}\cite{abdm}\cite{cbdm_conf}} have been proposed to define similarities between possible values according to their conditional probability distributions obtained from the relevant attributes. Among them, Ahmad's measure \cite{adm} and the association-based measure \cite{abdm} adopt a similar idea that the similarity between two possible values from an attribute can be reflected by their corresponding conditional probability distributions on the other attributes. Context-based measure\cite{cbdm_journal}\cite{cbdm_conf} further filters out the irrelevant attributes. However, since they rely on the sole interdependence of attributes, they may fail in distance measurement when all the attributes are independent of each other \cite{jdm}\cite{YU2022KBS4}.To address such issue, a series of distance measures proposed in {\cite{jdm}\cite{cms}\cite{udm}\cite{woc}\cite{Adc} }consider both the intra- and inter-attribute statistical information. Some advanced clustering algorithms with attributes weighting mechanism \cite{scc}\cite{mwkm} have also been proposed to learn the importance of categorical attributes during clustering.

Since encoding and dissimilarity defining operations are usually performed separately from clustering, the encoding results or the defined dissimilarities cannot adapt well to different clustering tasks. {Accordingly, some recently proposed methods \cite{dlc}\cite{hd-ndw}\cite{COForest} have been proposed to first define the similarities, and then iteratively learn the similarities and data partitions. As a result, this type of method achieves competitive clustering accuracy compared to the existing clustering approaches. However, the method proposed in \cite{dlc} only considers pure ordinal data while the methods presented in \cite{hd-ndw} and \cite{COForest} are only designed for mixed data composed of ordinal and nominal attributes.} Most recently, a new clustering approach \cite{sig_clust} has been proposed to explore significant clusters of categorical data without requiring a known true number of clusters.

\section{Proposed Method}
\label{sct:proposed}

We first provide preliminaries and formulate the problem, then we introduce the attribute representation method and the learning algorithms in detail. {Frequently used symbols in this paper and the corresponding explanations are sorted out in Table~\ref{tb:symbol}.}

\begin{table}[h]

\caption{{Frequently used symbols. Note that we uniformly use lowercase, uppercase, bold lowercase, and bold uppercase to indicate value, set, vector, and matrix, respectively.}}
\label{tb:symbol}
\centering
\resizebox{1\columnwidth}{!}{
\begin{tabular}{c|l|c|l}
\toprule
Symbol & Explanation &  Symbol & Explanation \\
\midrule
$X$, $A$, $O$, $M$, and $C$& Dataset, attribute set, possible value set, cluster prototypes set, and cluster set & 
$\mathbf{x}_i$ and $x_{i}^r$& $i$-th data object and $r$-th value of $\mathbf{x}_i$  \\	
$a^r$, $v^r$  & $r$-th attribute, possible value number of $a^r$&	    
$A_u$ and $A_c$& Numerical and categorical attribute set\\	
$A_n$ and $A_o$& Nominal and ordinal attribute set of categorical attributes, $A_c=A_n \cup A_o$ &
$d$, $k$ and $n$  & Number of attributes( $d=d_u+d_c$), clusters, and data objects\\
$d_c$ and $d_u$ & Number of categorical attributes ($d_c=d_n+d_o$), and  numerical attributes &
$o_h^r$ &  $h$-th possible value of $a^r$ \\
$\mathbf{Q}$ and $c_l$ & Object-cluster affiliation matrix and $l$-th cluster  &
$q_{il}$ & A value indicating the affiliation between $\textbf{x}_i$ and $c_l$\\
$\textbf{m}_l$ & A vector describing data objects of $c_l$ &
$\gamma^r$ & Number of endogenous spaces corresponding to $a^r$ \\
$\mathcal{R}^r$ and $\mathcal{R}^r_b$ & Endogenous space set of $a^r$ and its $b$-th endogenous space&
 
$\Phi(\cdot,\cdot)$  and $\phi(\cdot,\cdot)$& Data object-level dissimilarity and value-level distance\\
$\mathbf{W}$ and $w^r$& Attribute weight set and and weight indicating the importance of $a^r$  &
$\kappa(\cdot,\cdot)$ & Base distance \\
\bottomrule
\end{tabular} }
\end{table}

\subsection{Preliminaries}
\label{subsct:problem}

Given a mixed dataset $X$ containing $n$ data objects $\{\textbf{x}_1,\textbf{x}_2,...,\textbf{x}_n\}$. Each data object $\textbf{x}_i=[x^1_i,x^2_i,...,x^d_i]^\top$ can be viewed as a $d$-dimensional vector with values from $d$ attributes $A=\{a^1,a^2,...,a^d\}$, including $d_u$ numerical attributes and $d_c$ categorical attributes. For simplicity, assuming that the former $d_u$ attributes in $A$ are numerical ones denoted as an attribute set $A_u$, and the latter $d_c$ are categorical attributes denoted as $A_c$. Thus we have $A_u=\{a^1,a^2,...,a^{d_u}\}$ and $A_c=\{a^{d_u+1},a^{d_u+2},...,a^d\}$. Note that the superscript denotes the index of attribute throughout the paper. Accordingly, we have $A=A_u\cup A_c$, and $d=d_u+d_c$. The $v^r$ possible values of each categorical attribute $a^r\in A_c$ are represented as a set $O^r=\{o^r_1,o^r_2,...,o^r_{v^r}\}$.

Since categorical attributes may also include ordinal attributes, we assume the former $d_n$ attributes in $A_c$ are nominal, and the latter $d_o$ are ordinal. Accordingly, we have $A_c=A_n\cup A_o$ and $d_c=d_n+d_o$. The sequential numbers $\{1,2,...,v^r\}$ further reflect the rank of possible values $O^r=\{o^r_1,o^r_2,...,o^r_{v^r}\}$ of an ordinal attribute $a^r$. For dataset without ordinal attribute, it is possible that $d_o=0$ and $d_c=d_n$.

A general goal of clustering is to partition the $n$ data objects into $k$ compact clusters $C=\{c_1,c_2,...,c_k\}$, where the intra-cluster objects are similar and inter-cluster objects are distinct. An $n\times k$ matrix $\textbf{Q}$ is usually maintained during clustering indicating the affiliations between objects and clusters. More specifically, the value of the $(i,l)$th entry of $\textbf{Q}$ can be obtained by
\begin{equation}\label{eq:obj2}
q_{il}=\left\{
\begin{array}{lll}
1&,   & \text{if}\ l=\arg\min\limits_y\Phi(\textbf{x}_i,\textbf{m}_y)\\
0&, & \text{otherwise}.\\
\end{array}
\right.
\end{equation}
As we focus on the crisp partitional clustering, $q_{il}$ should satisfy $\sum_{l=1}^kq_{il}=1$ and $q_{il}\in\{0,1\}$. $\Phi(\textbf{x}_i,\textbf{m}_l)$ denotes the dissimilarity between object $\textbf{x}_i$ and cluster $c_l$. We describe a cluster $c_l$ using a vector $\textbf{m}_l=[m^1_l,m^2_l,...,m^d_l]^\top$ called prototype, which is computed following the conventional k-prototypes algorithm \cite{kpt}:
\begin{equation}\label{eq:obj3}
m^r_l=\left\{
\begin{array}{lll}
\frac{1}{\sigma(c_l)}\sum_{i=1}^nq_{il}x_i^r&,  & \text{if}\ r\in\{1,2,...,d_u\}\\
o^r_h&,& \text{if}\ r\in\{d_u+1,d_u+2,...,d\},\\
\end{array}
\right.
\end{equation}
where $\sigma(\cdot)$ counts the number of elements in a set, and thus $\sigma(c_l)$ counts the number of objects assigned to $c_l$. $o^r_h$ is obtained by
\begin{equation}\label{eq:obj4}
h=\arg\max_u\sum_{i=1}^n q_{il}\delta(x^r_i=o^r_u),
\end{equation}
where $\delta(x^r_i=o^r_u)$ is a logical function, which returns value ``1'' if $x^r_i=o^r_u$ is true, otherwise, returns ``0''. The object-cluster dissimilarity $\Phi(\textbf{x}_i,\textbf{m}_l)$ in Eq.~(\ref{eq:obj2}) can be written as
\begin{equation}\label{eq:dissim1}
\Phi(\textbf{x}_i,\textbf{m}_l)=\sum_{r=1}^d\phi(x^r_i,m^r_l),
\end{equation}
where the inner-sum $\phi(x^r_i,m^r_l)$ is the dissimilarity between the $r$th values of $\textbf{x}_i$ and $\textbf{m}_l$.

{Since numerical, nominal, and ordinal attributes are with different distance space structures, they may provide heterogeneous information in terms of clustering. Thus, how to homogeneously represent them to eliminate their awkward information gap is the key problem that should be solved for mixed data clustering.}

\subsection{Homogeneous Attribute Representation}
\label{subsct:pbr}


{To eliminate the awkward information gap between heterogeneous attributes, we need to represent them homogeneously.} From the perspective of concepts described by the values of an attribute, the values of a numerical attribute describe tendencies between two (usually opposite) concepts (e.g., \{high, low\} of attribute ``temperature'') while the values of a categorical attribute usually describe multiple concepts (e.g., \{driver, lawyer, nurse, police\} of an attribute ``occupation''). The ordinal attributes are also common among categorical attributes, where multiple ordered concepts (e.g., \{accept, weak accept, marginal, reject\}) describe tendencies between two concepts (e.g., \{accept, reject\} of attribute ``recommendation''). Figure~\ref{fig:concept} shows the concepts of the three types of attributes.

More specifically, the values of a numerical attribute $a^r\in A_u$ are linearly arranged between two conceptual values $o^r_{min}$ and $o^r_{max}$ in a one-dimensional space, where $o^r_{min}=0$ and $o^r_{max}=1$ if the values are normalized into the interval $[0,1]$. By contrast, the values of a nominal attribute $a^r\in A_c$ can be viewed as concentrating on $v^r$ non-linearly arranged conceptual values in $O^r$. The case between numerical and nominal attributes is the ordinal attribute, where the attribute values concentrate on $v^r$ linearly arranged conceptual values in $O^r$. Numerical attribute is with well-defined Euclidean distance space, which features easy-to-represent and computational convenience. {Therefore, how to represent a categorical attribute in one-dimensional space like numerical attribute but still preserve the original information of the attribute is the goal of our representation.} 

\begin{figure}[h]	
\centerline{\includegraphics[width=3.6in]{ 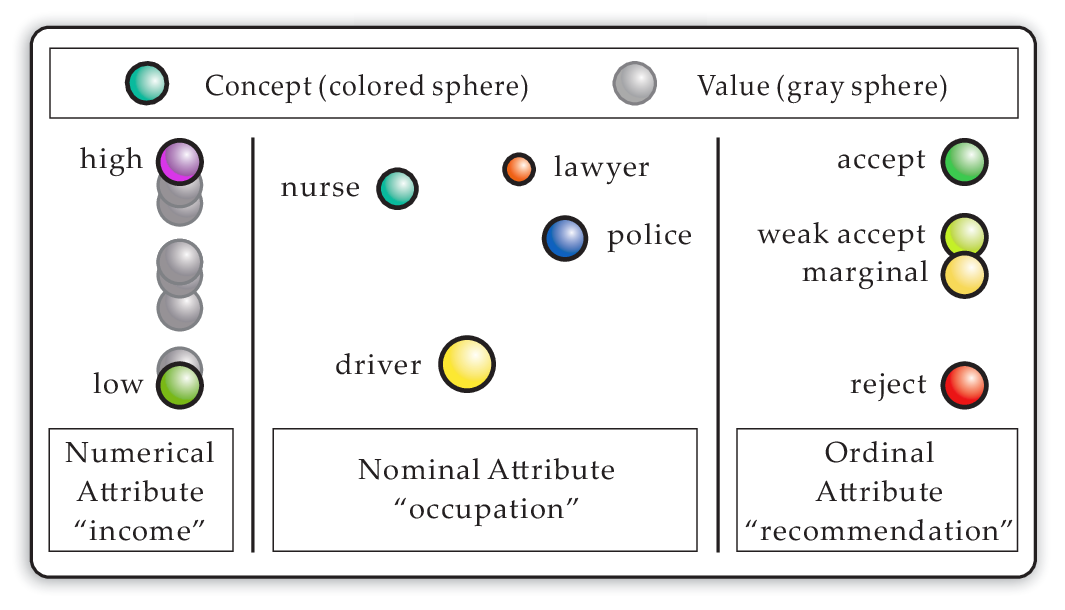}}
\caption{{Comparison of the ``concepts'' of numerical attribute, nominal attribute, and ordinal attribute, which illustrates how ``concepts" and ``values" are represented across different attribute types. Numerical attributes, like ``income," maintain a continuous order (e.g., high to low). Nominal attributes, such as ``occupation," consist of distinct categories without inherent order (e.g., nurse, lawyer, driver). Ordinal attributes, like ``recommendation," contain ordered categories (e.g., accept, weak accept, marginal, reject), highlighting their hierarchical structure.}}	
\label{fig:concept}	
\end{figure}

{The information carried by categorical attributes refers to their multidimensional distance structure, which is rich in information.} However, when converted into a one-dimensional Euclidean distance structure, this can easily lead to a loss of the original information. To preserve this information, we use a  multiple spaces projection approach, which retains the rich structure while enabling the conversion into a one-dimensional distance structure similar to numerical attributes. This allows categorical attributes to participate seamlessly in arithmetic operations during clustering. However, when One-Hot Encoding (OHE) is used, it imposes an equidistant constraint on the distance structure, preventing the original information of categorical attributes from being fully preserved.

For a categorical attribute $a^r\in A_c$, distance between any pair of values $o^r_g$ and $o^r_h$ can be informatively measured based on dataset statistics by
\begin{equation}\label{eq:cpddiff}
\kappa(o^r_g,o^r_h)=\sum_{s=1}^{d}\sum_{j=1}^{v^s}|p(o_j^s|o_g^r)-p(o_j^s|o_h^r)|{,}
\end{equation}
where the term $p(o_j^s|o_g^r)$ is actually the occurrence probability of the value $o^s_j$ as given $o^r_g$, which can be written as
\begin{equation}\label{eq:cp}
p(o_j^s|o_g^r)=\frac{\sum_{u=1}^n\delta(x^s_u=o^s_j\wedge x^r_u=o^r_g)}{\sum_{i=1}^n\delta(x^r_i=o^r_g)}.
\end{equation}
Eq.~(\ref{eq:cpddiff}) measures the overall difference between two Conditional Probability Distributions (CPDs) obtained from attribute $a^s$ as given two possible values $o^r_g$ and $o^r_h$ from $a^r$. Distance defined in this form has been commonly adopted by the recent works \cite{cms}\cite{hd-ndw} proposed for categorical data, and is generalized to HOmogeneous Distance (HOD) \cite{hd-ndw} under the case that the dataset is composed of both nominal and ordinal attributes. To further exploit the information offered by numerical attributes, we discrete numerical attributes and treat them as ordinal attributes to reflect distances of categorical attributes. This distance is called base distance as the following projection-based representation will be implemented based on these distances. {Figure~\ref{fig:dist_indication} shows the reflection relationship between different types of attributes for computing $\kappa(o^r_g,o^r_h)$.}

\begin{figure}[h]	
 
\centerline{\includegraphics[width=3.6in]{ 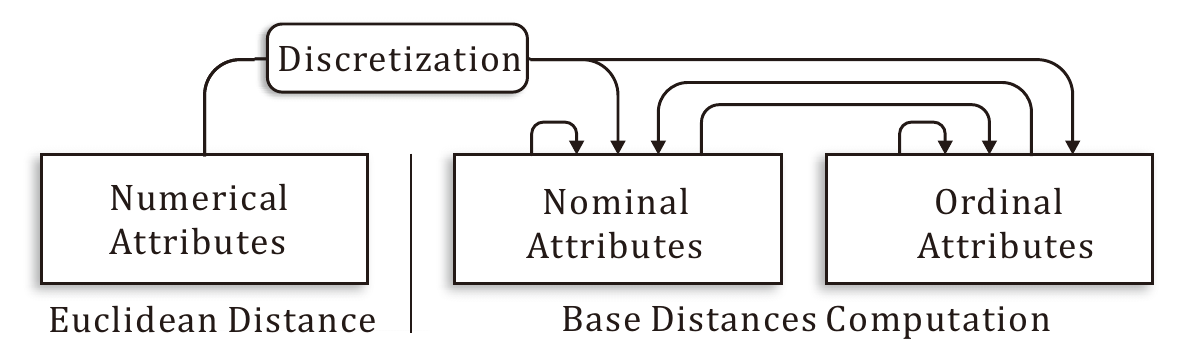}}
\caption{Relationships among different types of attributes in base distances computation. ``A$\rightarrow$B'' means that base distances of Type-B attributes are computed with the contribution of Type-A attributes.}	\label{fig:dist_indication}	
\end{figure}

{However, distance spaces of categorical attributes yielded by HOD are not guaranteed to be one-dimensional spaces like that of numerical attributes, see Figure~\ref{fig:concept}.} To represent the heterogeneous space structures of numerical and categorical attributes in a homogeneous way, we dismantle the space structure of a categorical attribute $a^r$ by projecting all its $n$ attribute values onto each of the one-dimensional spaces $\mathcal{R}^r=\{\mathcal{R}^{r,1},\mathcal{R}^{r,2},...,\mathcal{R}^{r,\gamma^r}\}$ spanned by $\gamma^r$ pairs of conceptual values (i.e. possible values) where $\gamma^r=v^r(v^r-1)/2$. The projection is implemented by geometrically projecting the $n$ values onto each space based on the base distance $\kappa(o^r_g,o^r_h)$. {Specifically, projection point of a value $o^r_t$ on a space $\mathcal{R}^{r,b}$ ($b\in\{1,2,...,\gamma^r\}$) spanned by $o^r_g$ and $o^r_h$ can be described by the distance between $o^r_t$ and $o^r_g$, and also interchangeably by the distance between $o^r_t$ and $o^r_h$}, which can be computed by:
\begin{equation}\label{eq:pp1}
\phi(o^r_t,o^r_g;\mathcal{R}^{r,b})\!=\!\frac{|\kappa(o^r_t,o^r_g)^2\!-\!\kappa(o^r_t,o^r_h)^2\!+\!\kappa(o^r_g,o^r_h)^2|}{2\kappa(o^r_g,o^r_h)}
\end{equation}
and
\begin{equation}\label{eq:pp2}
\phi(o^r_t,o^r_h;\mathcal{R}^{r,b})\!=\!\frac{|\kappa(o^r_t,o^r_h)^2\!-\!\kappa(o^r_t,o^r_g)^2\!+\!\kappa(o^r_g,o^r_h)^2|}{2\kappa(o^r_g,o^r_h)},
\end{equation}
respectively. {Eqs}.~(\ref{eq:pp1}) and~(\ref{eq:pp2}) are derived by directly applying the Pythagorean theorem. A more intuitive demonstration of such a process of a categorical attribute $a^r$ is shown in Figure~\ref{fig:principle}. An attribute $a^r$ with $4$ possible values is expanded into $6$ distinct attributes after the projection process. Each projected attribute represents unique information, highlighting the rich information content of the original attribute, which naturally justifies the expansion factor. This scaling is determined by the number of possible values for the categorical attribute. {Therefore, dividing its contribution equally among the $6$ projected attributes would dilute its inherent richness and is not appropriate.}

\begin{figure}[h]	
 
\centerline{\includegraphics[width=4.6in]{ 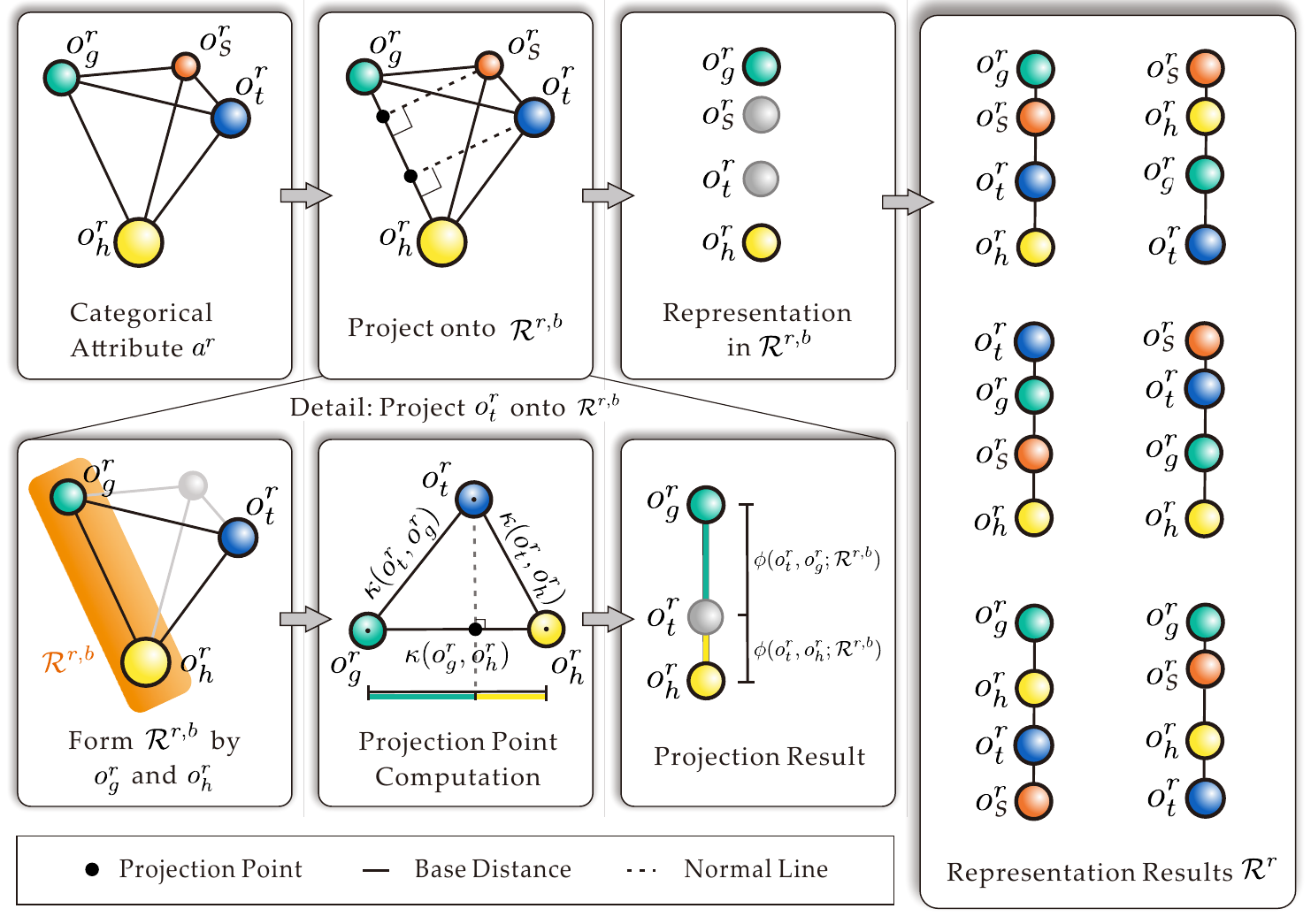}}  
\caption{{Projection processes of a categorical attribute $a^r$. An attribute $a^r$ with $4$ possible values is expanded into $6$ distinct attributes after the projection process. Each projected attribute represents unique information, highlighting the rich information content of the original attribute.} }	
\label{fig:principle}
\end{figure}

{The projection for ordinal attributes is a special case for the projection of categorical attributes.} Since we assume that the ordinal values are arranged linearly as shown in Figure~\ref{fig:concept}, the $\gamma^r$ one-dimensional spaces spanned by the conceptual value pairs are exactly overlapped, and so do the corresponding projection results. Thus, only one one-dimensional space $\mathcal{R}^r=\{\mathcal{R}^{r,1}\}$ where $\gamma^r=1$ will be enough for representing an ordinal attribute $a^r$, and the projection point of a value $o^r_t$ can be directly reflected by the based distances:
\begin{equation}\label{eq:pp3}
\phi(o^r_t,o^r_g;\mathcal{R}^{r,b})=\kappa(o^r_t,o^r_g)
\end{equation}
and
\begin{equation}\label{eq:pp4}
\phi(o^r_t,o^r_h;\mathcal{R}^{r,b})=\kappa(o^r_t,o^r_h)
\end{equation}
without loss of the generality for both the two types of categorical attributes.

{Given a certain space $\mathcal{R}^{r,b}$ spanned by $o^r_g$ and $o^r_h$, all the $n$ values of $a^r$ are linearly arranged after the projection, and the distance between any two values $o^r_u$ and $o^r_f$ represented in $\mathcal{R}^{r,b}$ can be written based on Eqs.~(\ref{eq:pp1})-(\ref{eq:pp4})} as
\begin{flalign}\label{eq:ppdist1}
\phi(o^r_u,o^r_f;\mathcal{R}^{r,b})&=|\phi(o^r_u,o^r_g;\mathcal{R}^{r,b})-\phi(o^r_f,o^r_g;\mathcal{R}^{r,b})|\nonumber\\
&=|\phi(o^r_u,o^r_h;\mathcal{R}^{r,b})-\phi(o^r_f,o^r_h;\mathcal{R}^{r,b})|.
\end{flalign}
The projection of $a^r$ actually generates $\gamma^r$ sub-attributes $A^r=\{a^r_1,a^r_2,...,a^r_{\gamma^r}\}$ with common possible values $O^r$ represented in $\gamma^r$ spaces $\mathcal{R}^r$. Note that $A^r$ with superscript $r$ indicates sub-attributes obtained after the representation of original attribute $a^r$, while $A_c$ with subscript $c$ indicates that the attribute type is categorical. After representing all the $d_c$ categorical attributes, attribute set $A$ is updated to
\begin{equation}\label{eq:att_all}
\hat{A}=A_u\cup\hat{A}_c,
\end{equation}
where
\begin{equation}\label{eq:att_cate}
\hat{A}_c=A^{d_u+1}\cup A^{d_u+2}\cup...\cup A^d.
\end{equation}
The symbol ``hat'' denotes that an original set or quantity is updated after the representation, see $\hat{A}$ and $\hat{A}_c$. {Accordingly, we have $\hat{d}=d_u+\hat{d}_c$, where $\hat{d}_c=\sum_{r=d_u+1}^d\gamma^r$.}

\begin{figure*}[h]	
\centerline{\includegraphics[width=6.6in]{ 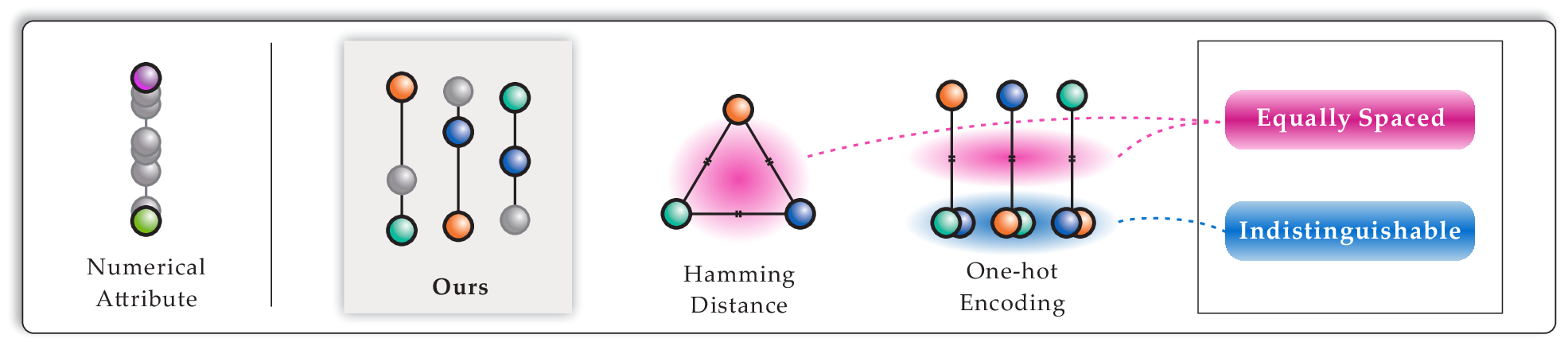}}
\caption{{Comparison of Hamming distance, one-hot encoding, and categorical attribute representation obtained through our method. It is obvious that our representation is more informative, and provides a homogeneous basis for distance computation on heterogeneous attributes.}}	
\label{fig:compare}	
\end{figure*}

{Based on the updated attribute set $\hat{A}$, we divide all distances of each $a^r\in\hat{A}_c$ by the maximum distance of $a^r$ to make the distances comparable to the distances of normalized numerical attributes in $A_u$.} The obtained distances $\phi(o^r_u,o^r_f;\mathcal{R}^{r,b})$ can be interchangeably denoted as $\phi(o^r_u,o^r_f)$ for simplicity without causing ambiguity. $\phi(o^r_u,o^r_f)$ calculates the absolute value of the dissimilarity between the encoded coordinates of $o^r_u$ and $o^r_f$ (see the ``Projection Result'' part of Figure~\ref{fig:principle}), which is equivalent to calculating the absolute value of the distance between two values on a numerical attribute. Because the differences among variables (attributes) in mixed data is often large, we use the Manhattan distance for Eq.~(\ref{eq:dissim1}). 

An intuitive comparison of attributes represented by the proposed method and the conventional methods is shown in Figure~\ref{fig:compare}. The representation of Hamming distance is significantly different from the structure of the numerical attribute, and all the different values are with identical distances. One-hot encoding has the additional problem of making different values indistinguishable. It is obvious that our representation is more informative, and provides a homogeneous basis for distance computation on heterogeneous attributes. {To obtain more clustering-friendly heterogeneous attribute representations, the key lies in how to integrate the representation process with clustering through joint learning.}

\subsection{Learning Algorithms}
\label{subsct:ls}

{To facilitate the learning of representations in clustering}, the distance $\Phi(\textbf{x}_i,\textbf{m}_l)$ in Eq.~(\ref{eq:dissim1}) is re-written as the linear combination of the dissimilarities between $x^r_i$ and $m^r_l$ reflected by different attributes in $\hat{A}$:
\begin{equation}\label{eq:dissim2}
\Phi_w(\textbf{x}_i,\textbf{m}_l)=\sum_{r=1}^{\hat{d}}\phi(x^r_i,m^r_l)\cdot w^r,
\end{equation}
where $w^r$ controls the contribution of $a^r$, $\textbf{w}=[w^1,w^2,...,w^{\hat{d}}]^\top$, and $\sum_{r=1}^{\hat{d}}w^r=1$. To learn the representations while adapting to the clustering task, data objects partition $\textbf{Q}$, cluster prototypes $M=\{\textbf{m}_1,\textbf{m}_2,...,\textbf{m}_k\}$, and the weights of attributes $\textbf{w}$ are iteratively updated in the following processes: (1) Let $\textbf{w}$ and $M$ be fixed, compute $\textbf{Q}$; (2) Let $\textbf{w}$ and $\textbf{Q}$ be fixed, compute $M$; (3) Iteratively implement (1) and (2) until convergence, then fix $\textbf{Q}$ and $M$, and update $\textbf{w}$. The above process follows the common learning steps for clustering with attributes weighting \cite{wkm}, and the updating strategy of $\textbf{Q}$ and $M$ have been presented by Eqs.~(\ref{eq:obj2}) -~(\ref{eq:obj4}) in Section~\ref{subsct:problem}.

The main difficulty lies in the updating of $\textbf{w}$. As mentioned in Section~\ref{subsct:pbr}, we represent an original attribute into a set of same-source sub-attributes $A^t$ with common possible values $O^t$ described by different distance structures computed by Eqs.~(\ref{eq:pp1})-(\ref{eq:ppdist1}), the attributes in $A^t$ can thus be considered to be more dependent. If we directly compute $\textbf{w}$ using the Lagrangian multiplier method or the gradient-decent method in the step (3), the weights of the same-source sub-attributes with similar represented distance structures will over-enhance each other in the next step (1). More specifically, suppose $\hat{a}^r$ and $\hat{a}^s$ are the same-source attributes in $A^t$, if the total intra-cluster distance contributed by $a^r$ and $a^s$, i.e., $D^r=\sum_{i=1}^n\sum_{l=1}^kq_{il}\phi(x^r_i,m^r_l)$ and $D^s=\sum_{i=1}^n\sum_{l=1}^kq_{il}\phi(x^s_i,m^s_l)$, are both relatively low and are close to each other, then the effects of the weights $w^r$ and $w^s$ computed according to $D^r$ and $D^s$, respectively, will over-enhance each other in the next step (1) because they may have similar effects in forming clusters. Such a synergy effect can easily cause a corrupt solution.

{Accordingly, we adopt a novel weight updating strategy that more comprehensively considers the intra-cluster compactness and inter-cluster separation contributed by attributes to obtain discriminative quantification of the attribute weights.} Specifically, a weight $w^r$ is computed by
\begin{equation}\label{eq:update1}
w^r=\frac{I^r}{\sum_{s=1}^{\hat{d}}I^s},
\end{equation}
where $I^r$ is defined as
\begin{equation}\label{eq:update2}
I^r=\frac{S^r}{D^r}.
\end{equation}
We adopt average intra-cluster distance
\begin{equation}\label{eq:update3}
D^r=\frac{1}{n}\sum_{i=1}^n\sum_{l=1}^kq_{il}\phi(x^r_i,m^r_l)
\end{equation}
to {indicate the importance of $a^r$ in terms of the intra-cluster compactness, and adopt average inter-cluster distance}
\begin{equation}\label{eq:update4}
S^r=\frac{1}{n(k-1)}\sum_{i=1}^n\sum_{l=1}^k(1-q_{il})\phi(x^r_i,m^r_l)
\end{equation}
to {quantify the importance of $a^r$ in terms of the inter-cluster separation.} We compute average intra- and inter-cluster distance to make $D^r$ and $S^r$ comparable, and a more important attribute $a^r$ should have a smaller $D^r$ and a larger $S^r$. Moreover, $\frac{1}{n}$ in Eq.~(\ref{eq:update3}) and $\frac{1}{n(k-1)}$ in Eq.~(\ref{eq:update4}) can be simultaneously omitted, as their division $\frac{1}{n(k-1)}/\frac{1}{n}$ is a constant in Eq.~(\ref{eq:update2}). {The proposed learning algorithm is summarized in Algorithm~\ref{alg:hlc}.}

\begin{algorithm}[h]
\caption{\textbf{H}eterogeneous \textbf{A}ttribute \textbf{R}econstruction and \textbf{R}epresentation learning (HARR)} 
\label{alg:hlc}
\begin{description}
  \item[Input:] Dataset $X$, number of clusters $k$.
  \item[Output:] Data partition $\textbf{Q}$ and attribute weights $\textbf{w}$.
  \item[Step 1:] Represent all the categorical attributes according to Eqs.~(\ref{eq:pp1}) -~(\ref{eq:ppdist1}), obtain $\hat{A}$;
  \item[Step 2:] Initialize $\textbf{Q}'$ and $\textbf{Q}''$ by setting all their values to 0 (for judging the convergence); Initialize $M$ with randomly selected $k$ objects; Initialize $\textbf{w}$ by $w^r=1/\hat{d}$;
  \item[Step 3:] Fix $\textbf{w}$ and $M$, compute $\textbf{Q}$ by Eq.~(\ref{eq:obj2}). If $\textbf{Q}\neq\textbf{Q}'$, set $\textbf{Q}'=\textbf{Q}$, and turn to Step 4; Otherwise, turn to Step 5;
  \item[Step 4:] Fix $\textbf{w}$ and $\textbf{Q}$, compute $M$ by Eq.~(\ref{eq:obj3}), go to Step 3;
  \item[Step 5:] If $\textbf{Q}\neq\textbf{Q}''$, set $\textbf{Q}''=\textbf{Q}$, fix $\textbf{Q}$ and $M$, compute $\textbf{w}$ according to Eqs.~(\ref{eq:update1})-(\ref{eq:update4}), and turn to Step 3; Otherwise, stop and output $\textbf{Q}$ and $\textbf{w}$.
\end{description}
\end{algorithm}


{We focus on the learning of linear combinations of represented attributes.} It would waste useful learning evidence if we directly use the contribution $I^r$ of each attribute $a^r$ on all the clusters measured by Eq.~(\ref{eq:update2}), but ignore the specific coupling relation of each attribute-cluster pair. Therefore, we further propose a more advanced learning strategy to adequately exploit the learning evidence provided by the distances caused by the attributes on different clusters. We expand the original weight vector $\textbf{w}$ into a $k\times\hat{d}$ matrix $\mathbf{W}$ with its $(l,r)$th entry stores the weight $w^r_l$ representing the importance of $a^r$ in forming cluster $c_l$. {Following the updating strategy of $\textbf{w}$ described by Eqs.~(\ref{eq:update1})-~(\ref{eq:update4}), we quantify the importance of $w^r_l$} by
\begin{equation}\label{eq:update1m}
w^r_l=\frac{I^r_l}{\sum_{s=1}^{\hat{d}}I^s_l}{,}
\end{equation}
where $I^r_l$ quantifies the contribution of $a^r$ in making the objects of cluster $c_l$ more compact, and more distinct from the objects of the other clusters, which is defined as
\begin{equation}\label{eq:update2m}
I^r_l=\frac{S^r_l}{D^r_l}.
\end{equation}
We adopt average intra-cluster distance on cluster $c_l$
\begin{equation}\label{eq:update3m}
D^r_l=\frac{1}{\sigma(c_l)}\sum_{i=1}^nq_{il}\phi(x^r_i,m^r_l)
\end{equation}
to {indicate the importance of $a^r$ in compacting the $\sigma(c_l)$ objects of $c_l$.} We adopt average distance formed between $c_l$ and objects outside $c_l$
\begin{equation}\label{eq:update4m}
S^r_l=\frac{1}{n-\sigma(c_l)}\sum_{i=1}^n(1-q_{il})\phi(x^r_i,m^r_l)
\end{equation}
to {quantify the importance of $a^r$ in separating the $\sigma(c_l)$ objects in $c_l$ from the rest $n-\sigma(c_l)$ objects.} The corresponding algorithm is summarized as Algorithm~\ref{alg:hlcm}. {As Algorithm~\ref{alg:hlc} updates a weight \textbf{V}ector $\textbf{w}$, while Algorithm~\ref{alg:hlcm} updates a weight \textbf{M}atrix $\textbf{W}$, these two algorithms are denoted as HARR-V and HARR-M, respectively, to distinguish.} 

\begin{algorithm}[h]
\caption{HARR to update weight \textbf{M}atrix (HARR-M)}
\label{alg:hlcm}
\begin{description}
  \item[Input:] Dataset $X$, number of clusters $k$.
  \item[Output:] Data partition indicator matrix $\textbf{Q}$ and attribute weights $\textbf{W}$.
  \item[Step 1:] Represent all the categorical attributes according to Eqs.~(\ref{eq:pp1}) -~(\ref{eq:ppdist1}), obtain $\hat{A}$;
  \item[Step 2:] Initialize $\textbf{Q}'$ and $\textbf{Q}''$ by setting all their values to 0 (for judging the convergence); Initialize $M$ with randomly selected $k$ objects; Initialize $\textbf{W}$ by $w^r_l=1/\hat{d}$;
  \item[Step 3:] Fix $\textbf{W}$ and $M$, compute $\textbf{Q}$ by Eq.~(\ref{eq:obj2}). If $\textbf{Q}\neq\textbf{Q}'$, set $\textbf{Q}'=\textbf{Q}$, and turn to Step 4; Otherwise, turn to Step 5;
  \item[Step 4:] Fix $\textbf{W}$ and $\textbf{Q}$, compute $M$ by Eq.~(\ref{eq:obj3}), go to Step 3;
  \item[Step 5:] If $\textbf{Q}\neq\textbf{Q}''$, set $\textbf{Q}''=\textbf{Q}$, fix $\textbf{Q}$ and $M$, compute $\textbf{W}$ according to Eqs.~(\ref{eq:update1m})-(\ref{eq:update4m}), and turn to Step 3; Otherwise, stop the algorithm, output $\textbf{Q}$ and $\textbf{W}$.
\end{description}
\end{algorithm}

\section{Theoretical Analysis}
\label{sct:theory}


\begin{theorem}\label{the:metric}
Distance measure yielded by the proposed projection-based representation is a {distance metric.}
\end{theorem}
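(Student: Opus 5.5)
The plan is to verify the metric axioms (non-negativity, identity of indiscernibles, symmetry, triangle inequality) first for the value-level distance $\phi(\cdot,\cdot;\mathcal{R}^{r,b})$ on each projected one-dimensional space. We then lift them to the object-level dissimilarity $\Phi_w$ of Eq.~(\ref{eq:dissim2}), a non-negatively weighted sum of these per-attribute distances. The structural observation doing most of the work is that, by Eq.~(\ref{eq:ppdist1}), each projected sub-attribute $a^r_b$ assigns every possible value $o^r_t$ a real coordinate $z_t := \phi(o^r_t,o^r_g;\mathcal{R}^{r,b})$. The distance between two values is then $|z_u-z_f|$, i.e., the pullback of the Euclidean (absolute-value) metric on $\mathbb{R}$ under the coordinate map. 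The same holds for ordinal attributes via Eqs.~(\ref{eq:pp3})--(\ref{eq:pp4}), and for numerical attributes directly. Normalising by the maximum distance of each attribute only rescales by a positive constant and preserves all axioms.

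For each sub-attribute, I would argue as follows. Non-negativity and symmetry are immediate from the absolute value. The triangle inequality follows from $|z_u-z_f|\le |z_u-z_e|+|z_e-z_f|$. Identity of indiscernibles requires care: $\phi(o,o)=0$ is clear, but two distinct values may project to the same coordinate on a given space. I would handle this on the attribute level rather than the sub-attribute level. For distinct $o^r_u\neq o^r_f$, consider the space spanned by the pair $(o^r_u,o^r_f)$ itself. Eq.~(\ref{eq:pp1}) gives coordinate $0$ for $o^r_u$ and $\kappa(o^r_u,o^r_f)$ for $o^r_f$, so their distance there equals $\kappa(o^r_u,o^r_f)$. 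This is positive whenever the conditional probability distributions in Eq.~(\ref{eq:cpddiff}) differ; note that for $s=r$ the term alone already contributes $2>0$. Hence the sum over the $\gamma^r$ sub-attributes is zero iff the values coincide. This needs $\kappa(o^r_g,o^r_h)>0$, which also keeps the denominators in Eqs.~(\ref{eq:pp1})--(\ref{eq:pp2}) well defined.

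Next, I would lift the result to objects. $\Phi_w(\mathbf{x}_i,\mathbf{x}_j)=\sum_r w^r\phi(x^r_i,x^r_j)$ with $w^r\ge 0$ (the weights of Eq.~(\ref{eq:update1}) are ratios of non-negative quantities). Non-negativity, symmetry and the triangle inequality are preserved term by term. For identity, strictly positive weights (true at initialization, and whenever $S^r>0$) together with the per-attribute identity give $\Phi_w=0$ iff all original attribute values coincide. If some weight can vanish, the statement degrades to a pseudometric, which I would remark on.

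The main obstacle I expect is the well-definedness of the coordinate map. Eq.~(\ref{eq:pp1}) carries an outer absolute value, so the ``coordinate'' is really $|\cdot|$ of the signed projection. A value lying on the far side of $o^r_g$ could be folded onto the same coordinate as another value, breaking consistency with the second expression in Eq.~(\ref{eq:ppdist1}). I would argue that, since $\kappa$ is itself a metric (an $L_1$ distance between CPD vectors), the signed projection $\frac{\kappa_{tg}^2-\kappa_{th}^2+\kappa_{gh}^2}{2\kappa_{gh}}$ defines a genuine map to $\mathbb{R}$. Taking absolute values of coordinates still yields a function $o\mapsto z_o$, and any function into $\mathbb{R}$ pulls back $|\cdot|$ to a pseudometric. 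This makes the triangle inequality hold regardless of geometry, and the identity issue is resolved as above.
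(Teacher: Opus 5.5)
Your skeleton is the paper's own. It also argues that every projected sub-attribute lives in a one-dimensional Euclidean space, so the four axioms hold for the value-level distance of Eq.~(\ref{eq:ppdist1}), and then transfers them through the non-negatively weighted sum $\Phi_w$.

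You are more careful than the paper at two points where its proof is loose. First, it asserts $o^r_g=o^r_f\Leftrightarrow\phi(o^r_g,o^r_f)=0$ on each individual sub-attribute, which is false in general. For example, with four nominal values at equal base distances, the two values not spanning a given space both project to its midpoint. Second, it claims identity for $\Phi_w$ with only $w^r\ge 0$. Your repair for nominal attributes is correct: pass to the sub-attribute spanned by the pair $(o^r_u,o^r_f)$ itself, where the distance equals $\kappa(o^r_u,o^r_f)\ge 2$ because of the $s=r$ term in Eq.~(\ref{eq:cpddiff}). Your pseudometric caveat for vanishing weights is also correct.

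The step that still fails is identity of indiscernibles for \emph{ordinal} attributes. For those the paper keeps a single space, $\gamma^r=1$ in Eqs.~(\ref{eq:pp3})--(\ref{eq:pp4}), so the pair-spanned space your argument needs does not exist. The only coordinate is $z_t=\kappa(o^r_t,o^r_g)$ for one fixed reference value, and two distinct values equidistant from it collapse.

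Under the literal Eq.~(\ref{eq:cpddiff}) this actually happens. Take a dataset with one ordinal attribute taking three values. Only the $s=r$ term contributes, so $\kappa\equiv 2$ off the diagonal. The $o^r_1$-based coordinates are then $(0,2,2)$, giving $\phi(o^r_2,o^r_3)=0$, while the $o^r_3$-based expression in Eq.~(\ref{eq:ppdist1}) collapses $o^r_1$ and $o^r_2$ instead. More generally, the $s=r$ term forces $\kappa(o^r_1,o^r_3)\le\kappa(o^r_1,o^r_2)+\kappa(o^r_2,o^r_3)-2$, so ordinal values are never exactly collinear under Eq.~(\ref{eq:cpddiff}). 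The paper's ``linearly arranged'' premise therefore cannot be derived from the base distance.

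To close the gap, adopt that premise as an explicit hypothesis on the ordinal base distances. One example is defining them as sums of adjacent-value distances, with $o^r_g$ an end value. This gives $\phi(o^r_u,o^r_f)=|\kappa(o^r_u,o^r_g)-\kappa(o^r_f,o^r_g)|=\kappa(o^r_u,o^r_f)>0$ for $u\neq f$. Without such a hypothesis, ordinal sub-attributes yield only a pseudometric, and hence so does $\Phi_w$.
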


\begin{proof}
Since all the values of a categorical attribute are projected onto one-dimensional Euclidean spaces, the value-level distances computed by Eq.~(\ref{eq:ppdist1}) satisfy:
\begin{enumerate}
\item $\phi(o^r_g,o^r_f)\geq0$
\item $o^r_g=o^r_f\Leftrightarrow \phi(o^r_g,o^r_f)=0 $
\item $\phi(o^r_g,o^r_f)=\phi(o^r_f,o^r_g)$
\item $\phi(o^r_g,o^r_f)\leq\phi(o^r_g,o^r_t)+\phi(o^r_t,o^r_f)$
\end{enumerate}
for any $g,f,t\in\{1,2,...,v^r\}$ and $r\in\{d_u+1,d_u+2,...,\hat{d}\}$. The object-level distance can be written based on the value-level distance as $\Phi(\textbf{x}_a,\textbf{x}_b)=
\sum_{r=1}^{\hat{d}}\phi(x^r_a,x^r_b)$ according to Eq.~(\ref{eq:dissim1}), which can be generalized to $\Phi_w(\textbf{x}_a,\textbf{x}_b)=
\sum_{r=1}^{\hat{d}}\phi(x^r_a,x^r_b)w^r$ where $w^r$ ($w^r\geq 0$) is the weight corresponding to the $r$th represented attribute. According to the metric properties of value-level distances, it is clear that the object-level distances satisfy:
\begin{enumerate}
\item $\Phi_w(\textbf{x}_a,\textbf{x}_b)\geq0$
\item $\textbf{x}_a=\textbf{x}_b\Leftrightarrow \Phi_w(\textbf{x}_a,\textbf{x}_b)=0 $
\item $\Phi_w(\textbf{x}_a,\textbf{x}_b)=\Phi_w(\textbf{x}_b,\textbf{x}_a)$
\item $\Phi_w(\textbf{x}_a,\textbf{x}_b)\leq \Phi_w(\textbf{x}_a,\textbf{x}_c)+\Phi_w(\textbf{x}_c,\textbf{x}_b)$
\end{enumerate}
for any $a,b,c\in\{1,2,...,n\}$ and $r\in\{1,2,...,\hat{d}\}$. Accordingly, the distance measure yielded by the proposed projection-based representation is a distance metric.
\end{proof}

\subsection{Degree of Learning Freedom}

Optimal representation of categorical attributes can be obtained by encode each possible value of the attributes with an optimal numerical value. To search for such optimal representation, each possible value is treated as an encoding variable that will be learned during clustering. For the convenience of analysis without loss of generality, we assume each categorical attribute is with the identical number of possible values $\upsilon=\sum_{r=1}^{d_c}v^r/d_c$.

\begin{lemma} Given $X$ composed of $d_c$ categorical attributes $A_c$, the Maximum number of Possible encoding Variables (MoPV) w.r.t. the clustering task that searches for $k$ clusters is $kd_c(\upsilon-1)$, as the Euclidean space is a one-dimensional space for each attribute, and determining the $\upsilon-1$ distances between each pair of adjacent possible values is equivalent to the determining of the encoded values of all the $\upsilon$ possible values.
\end{lemma}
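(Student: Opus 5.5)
The claim is a counting statement, so I would prove it by a degrees-of-freedom argument: build an explicit parametrisation of every admissible encoding for one attribute and one cluster, then multiply. The model I would fix is the following. Encoding a categorical attribute $a^r$ means assigning each possible value $o^r_h$ a real coordinate $e^r_h$ on a one-dimensional Euclidean line, with value-level distance $|e^r_g-e^r_h|$. In the most flexible learning setting, the cluster-wise setting of HARR-M, each cluster $c_l$ may use its own encoding of every attribute. The largest number of free variables is therefore $k\cdot d_c$ times the number of free variables in one such line encoding, and the main work is to show that this per-line count is $\upsilon-1$.

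For a single attribute and a single cluster, I would first note the invariances. Object-cluster dissimilarities depend only on pairwise differences $|e^r_g-e^r_h|$, so any translation $e^r_h\mapsto e^r_h+c$ and any reflection $e^r_h\mapsto -e^r_h$ leave all distances, and hence the clustering objective, unchanged. Modulo these, an encoding is determined by the sorted order of the values together with the $\upsilon-1$ gaps $\delta_1,\dots,\delta_{\upsilon-1}\ge0$ between adjacent values. Conversely, fixing any one coordinate, say the smallest at $0$, the gaps reconstruct all $\upsilon$ coordinates as partial sums. This gives a bijection between distance-equivalent encodings and gap vectors, so at most $\upsilon-1$ continuous variables exist per line.

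Next I would argue that $\upsilon-1$ cannot be reduced. The gaps are independent: any choice of nonnegative $\delta$'s is realised by some encoding. Moreover, distinct gap vectors with the same ordering produce distinct pairwise distance matrices, because adjacent distances are themselves entries of that matrix. Hence no smaller set of variables describes all achievable distance structures.

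Finally, summing over the $d_c$ attributes, each with $\upsilon$ values by the averaging assumption $\upsilon=\sum_r v^r/d_c$, and over the $k$ clusters whose encodings may be chosen independently, the total is $kd_c(\upsilon-1)$. The main obstacle is making precise why the count is per cluster, i.e.\ why $k$ separate encodings is the \emph{maximum} freedom relevant to a $k$-cluster task. I would handle this by noting that assignments depend only on the $k$ object-prototype dissimilarities $\Phi(\textbf{x}_i,\textbf{m}_l)$. Each of these uses one encoding of each attribute, so allowing more than one encoding per cluster-attribute pair adds no distinguishable freedom. The discrete choice of ordering is not a continuous variable and is excluded from the count.
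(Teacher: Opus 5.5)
Your argument is correct and is essentially the paper's own justification, which is given inline in the lemma: each attribute is encoded on a one-dimensional line, the $\upsilon-1$ adjacent gaps are equivalent to the encoded values, and the count is multiplied over the $d_c$ attributes and the $k$ clusters. The paper only asserts the gap--value equivalence and the factor $k$; your treatment of translation/reflection invariance, the independence of the gaps, and the per-cluster argument via the $k$ object--prototype dissimilarities spells out what the lemma leaves implicit.
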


A higher Degree of Learning Freedom (DoLF) of the independent representation variables can provide a more potential/promising basis for obtaining the optimal attribute representations w.r.t. a clustering task. We analyze the DoLF of the proposed HARR-V/M/T as follows. 


\begin{lemma} Given $X$ composed of $d_c$ categorical attributes $A_c$, DoLF of HARR-V is $d_c\upsilon(\upsilon-1)/2$ as HARR-V represents each of the $d_c$ attribute $a^r$ into each of the $\upsilon(\upsilon-1)/2$ one-dimensional spaces, where each such space is spanned by a possible pair of the $\upsilon$ possible values of $a^r$.
\end{lemma}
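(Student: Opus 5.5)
The plan is a direct count of the independent, learnable representation variables in HARR-V, in the same sense used in the preceding lemma for MoPV. The learnable variables are the weights that rescale distances inside each one-dimensional space. The projected values on those spaces come from the fixed base distances $\kappa(\cdot,\cdot)$ and are never updated. So I will compute the DoLF as the number of free weight entries that act on the categorical part of $\hat{A}$.

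First, I count the one-dimensional spaces produced by Step 1 of Algorithm~\ref{alg:hlc}. Under the uniform assumption $v^r=\upsilon$, each nominal attribute $a^r\in A_c$ is projected onto one space $\mathcal{R}^{r,b}$ for each unordered pair $\{o^r_g,o^r_h\}$. That gives $\gamma^r=\upsilon(\upsilon-1)/2$ sub-attributes per attribute. Summing over the $d_c$ categorical attributes gives $\hat{d}_c=\sum_{r}\gamma^r=d_c\upsilon(\upsilon-1)/2$, by Eqs.~(\ref{eq:att_all})--(\ref{eq:att_cate}).

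Second, I argue that each sub-attribute carries exactly one independent learnable variable. In HARR-V, Eq.~(\ref{eq:dissim2}) multiplies all value-level distances $\phi(\cdot,\cdot;\mathcal{R}^{r,b})$ on a given space by the single scalar $w^r$. The weight vector $\mathbf{w}$ is shared across all $k$ clusters, unlike HARR-M, so there is no factor $k$. The weights are computed by Eqs.~(\ref{eq:update1})--(\ref{eq:update4}) from cluster statistics that differ from one sub-attribute to another. Hence the weights of distinct spaces vary independently, and each space contributes one degree of freedom. Multiplying the two counts gives DoLF $=d_c\upsilon(\upsilon-1)/2$.

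The main obstacle is the normalization $\sum_r w^r=1$. Strictly, it removes one degree of freedom, and over the categorical weights alone it could be argued to remove none. I would handle this the way the paper does for MoPV: count the variables available to shape the representation, and note that a global scaling of $\Phi_w$ does not change the argmin in Eq.~(\ref{eq:obj2}). Under that convention the normalization is a harmless gauge fixing and is not counted. A second, milder issue is ordinal attributes, which have $\gamma^r=1$. The uniform-nominal simplification stated before the lemma lets me treat every attribute as having $\upsilon(\upsilon-1)/2$ spaces, so the count is exact in that setting and is an upper bound otherwise.
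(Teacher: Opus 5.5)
Your proposal is correct and follows the paper's own one-line justification. You count the $\upsilon(\upsilon-1)/2$ one-dimensional spaces spanned by value pairs of each categorical attribute, multiply by $d_c$, and assign each space a single weight $w^r$ shared across all $k$ clusters, which is exactly what separates this count from the HARR-M count. Your remarks on the normalization $\sum_r w^r=1$ and on ordinal attributes (where $\gamma^r=1$, so the formula becomes an upper bound) add care the paper does not take, but they do not change its argument.
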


\begin{theorem}
HARR-V yields a hyper-DoLF clustering if $\upsilon>2k$.
\end{theorem}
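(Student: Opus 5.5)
The plan is to make ``hyper-DoLF'' concrete and then compare two counts from the preceding lemmas. We read a hyper-DoLF clustering as one whose degree of learning freedom is strictly larger than the Maximum number of Possible encoding Variables (MoPV) for a $k$-cluster task. That is, the learnable representation variables strictly outnumber the encoding variables needed to express any optimal per-cluster one-dimensional encoding of the categorical attributes. With this reading the theorem reduces to a single inequality between the quantities in the two lemmas above.

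First, we take the MoPV from the first lemma. Each of the $d_c$ categorical attributes is encoded in a one-dimensional space, so only the $\upsilon-1$ gaps between adjacent possible values need to be fixed. Allowing a separate encoding for each of the $k$ clusters gives $kd_c(\upsilon-1)$.

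Second, we take the DoLF of HARR-V from the second lemma. Each attribute is projected onto $\upsilon(\upsilon-1)/2$ one-dimensional spaces, one for each pair of possible values. Each such space carries its own learnable weight, and these weights act as independent representation variables. This gives $d_c\upsilon(\upsilon-1)/2$.

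Third, we compare the two counts:
\begin{equation*}
\frac{d_c\upsilon(\upsilon-1)}{2} > kd_c(\upsilon-1) \iff \frac{\upsilon}{2} > k \iff \upsilon > 2k,
\end{equation*}
where the division by $d_c(\upsilon-1)$ is valid because $d_c\ge 1$ and $\upsilon\ge 2$. The condition $\upsilon\ge 2$ holds since a categorical attribute with a single value carries no information and is excluded. In the degenerate case $\upsilon=1$ both sides are zero and the hypothesis $\upsilon>2k\ge 2$ already excludes it. The equivalence shows that $\upsilon>2k$ is exactly the condition under which HARR-V's DoLF strictly exceeds the MoPV, which proves the theorem.

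The main obstacle is conceptual rather than computational. The term ``hyper-DoLF'' has to be fixed as ``DoLF exceeding MoPV,'' and we must justify that the $\upsilon(\upsilon-1)/2$ per-attribute weights count as independent variables on the same footing as the encoding variables. The first thing to try is to appeal directly to the two preceding lemmas as the accepted counts. We would also note that the normalization $\sum_r w^r=1$ removes only one global degree of freedom. This does not change the comparison for $\upsilon>2k$, except possibly in the boundary case of a strict versus non-strict inequality, which should be remarked on.
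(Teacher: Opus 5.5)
Your proposal is correct and follows the paper's proof: both read hyper-DoLF as the DoLF $d_c\upsilon(\upsilon-1)/2$ of HARR-V exceeding the MoPV $kd_c(\upsilon-1)$, then divide out $d_c(\upsilon-1)$ to reduce this to $\upsilon>2k$. Your version is slightly cleaner: you check that $d_c(\upsilon-1)>0$ and state the chain as an equivalence, so you prove the stated ``if'' direction explicitly, whereas the paper's argument literally runs from the assumption DoLF $>$ MoPV to $\upsilon>2k$.
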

\begin{proof}
Assuming DoLF of HARR-V is larger than MoPV of clustering, then we have $d_c\upsilon(\upsilon-1)/2>kd_c(\upsilon-1)$. By simplification, we have $\upsilon/2>k$, and thus $\upsilon>2k$. 
\end{proof}

\begin{lemma} Given $X$ composed of $d_c$ categorical attributes $A_c$, DoLF of HARR-M is $kd_c\upsilon(\upsilon-1)/2$ as HARR-M represents each attribute $a^r$ into each of the $\upsilon(\upsilon-1)/2$ one-dimensional spaces w.r.t. each of the $k$ clusters.
\end{lemma}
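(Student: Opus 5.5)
This is a counting statement, so I will prove it by counting the independent learnable weights that HARR-M maintains on the categorical part of $\hat{A}$, and show that each contributes one degree of freedom. I will mirror the argument behind the preceding lemma on HARR-V and add the cluster index $l$.

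\emph{Step 1: count the one-dimensional spaces.} By the projection-based representation in Section~\ref{subsct:pbr}, each categorical attribute $a^r\in A_c$ is expanded into the sub-attribute set $A^r=\{a^r_1,\dots,a^r_{\gamma^r}\}$. Here $\gamma^r=v^r(v^r-1)/2$, and each sub-attribute lives in the space $\mathcal{R}^{r,b}$ spanned by one pair $(o^r_g,o^r_h)$. Under the simplifying assumption $v^r=\upsilon$ for every $r$, we get $\hat{d}_c=\sum_{r}\gamma^r=d_c\upsilon(\upsilon-1)/2$. I treat all categorical attributes as nominal here, which is the worst case for the count. The HARR-V lemma already uses exactly this figure.

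\emph{Step 2: bring in the cluster index.} In HARR-M the vector $\mathbf{w}$ is replaced by the $k\times\hat{d}$ matrix $\mathbf{W}$. Its entry $w^r_l$ is updated by Eqs.~(\ref{eq:update1m})--(\ref{eq:update4m}) from statistics $D^r_l$ and $S^r_l$ that depend only on cluster $c_l$ and its complement. So for each fixed $l$, the row $(w^r_l)_r$ rescales every projected space $\mathcal{R}^{r,b}$ separately, and different rows are computed from different cluster-specific evidence. Each pair (cluster $c_l$, space $\mathcal{R}^{r,b}$) therefore carries its own independently learned scaling of a one-dimensional representation. Multiplying gives $k\cdot d_c\upsilon(\upsilon-1)/2$ representation variables, which is the claimed DoLF.

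\emph{Main obstacle.} The delicate point is justifying \emph{independence}. The row normalization $\sum_s w^s_l=1$ formally removes one degree per cluster. I would argue, following the convention of the HARR-V lemma and the MoPV lemma, that DoLF counts the representation spaces that are adapted, not the free parameters after normalization. The normalization only fixes a global scale per cluster, and a common scale does not change the argmin in Eq.~(\ref{eq:obj2}) within that cluster's distance comparisons, so it is not a representational constraint. I would state this convention explicitly and note that, with it, setting $k=1$ recovers the HARR-V count. As a consistency check, comparing with the MoPV $kd_c(\upsilon-1)$ shows that HARR-M exceeds it whenever $\upsilon>2$, independently of $k$.
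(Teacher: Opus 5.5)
Your Steps~1--2 are the paper's argument. The lemma is justified only by counting $d_c$ attributes, times $\upsilon(\upsilon-1)/2$ projection spaces per attribute, times the $k$ rows of $\mathbf{W}$. It also tacitly makes the assumption you state explicitly: every categorical attribute is expanded into all $\upsilon(\upsilon-1)/2$ spaces, i.e., treated as nominal, since an ordinal attribute only receives $\gamma^r=1$.

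What you should drop is the invariance argument in your last paragraph, because it is false for HARR-M. In HARR-M the assignment rule Eq.~(\ref{eq:obj2}) selects $\arg\min_y\sum_{r}\phi(x^r_i,m^r_y)\,w^r_y$. It therefore compares distances to \emph{different} clusters, each computed with a \emph{different} row of $\mathbf{W}$. Rescaling a single row $l$ inflates or deflates every distance to $c_l$, and this can change which cluster wins. Only a common positive rescaling of all of $\mathbf{W}$ is guaranteed to leave Eq.~(\ref{eq:obj2}) unchanged, and that accounts for one degree in total, not one per cluster.

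Even if per-row scale were irrelevant, the argument would not help you. It would make the scale a redundant direction, which supports $k(\hat{d}_c-1)$ effective parameters rather than $k\hat{d}_c$. So it points the wrong way for your purpose.

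Under the constraints $\sum_s w^s_l=1$, the number of free weights is $k\bigl(d_c\upsilon(\upsilon-1)/2-1\bigr)$. The lemma's value $kd_c\upsilon(\upsilon-1)/2$ holds only under a convention the paper adopts without comment: DoLF counts the adapted (cluster, projection-space) pairs. Declare that convention and let the count rest on it; the paper uses no further justification, and none is needed.
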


\begin{theorem}
HARR-M yields a hyper-DoLF clustering if $\upsilon>2$. 
\end{theorem}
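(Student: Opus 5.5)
The plan is to mirror the argument just given for HARR-V, now using the lemma on the DoLF of HARR-M. Hyper-DoLF clustering, as implicitly defined in the proof of the HARR-V theorem, means that the DoLF of the method strictly exceeds the MoPV of the clustering task. So the claim reduces to one inequality. On the left is the DoLF of HARR-M, namely $kd_c\upsilon(\upsilon-1)/2$, from the preceding lemma. On the right is the MoPV, namely $kd_c(\upsilon-1)$, from the first lemma of this subsection.

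The steps, in order, are as follows. First, write the required inequality
\begin{equation*}
\frac{kd_c\upsilon(\upsilon-1)}{2}>kd_c(\upsilon-1).
\end{equation*}
Second, note that $k\geq1$ and $d_c\geq1$ are positive, because the setting concerns data with at least one categorical attribute and at least one sought cluster. Note also that $\upsilon-1>0$ whenever $\upsilon>1$. Dividing both sides by the positive quantity $kd_c(\upsilon-1)$ then gives the equivalent condition $\upsilon/2>1$, that is, $\upsilon>2$. Third, read the chain of equivalences backward: assuming $\upsilon>2$ guarantees $\upsilon-1>0$, so the division is legitimate, and the original inequality holds. Hence HARR-M is hyper-DoLF.

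I expect the only real subtlety to be justifying the division, namely the sign of $kd_c(\upsilon-1)$. This is handled by the hypothesis $\upsilon>2$ itself, which forces $\upsilon-1\geq 2>0$, together with the standing positivity of $k$ and $d_c$.

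It is worth remarking, as a sanity check, that the factor $k$ cancels here. This is unlike the HARR-V case, where the MoPV carried a factor $k$ that the DoLF did not. That cancellation explains why the HARR-M condition is independent of $k$. Because $\upsilon$ is an average, one should also mention that the argument is carried out under the stated simplifying assumption that every categorical attribute has exactly $\upsilon$ possible values.
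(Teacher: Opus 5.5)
Your proposal is correct and follows the paper's proof. Both compare the HARR-M DoLF $kd_c\upsilon(\upsilon-1)/2$ with the MoPV $kd_c(\upsilon-1)$ and divide out $kd_c(\upsilon-1)$ to reach $\upsilon>2$; your explicit check that this factor is positive makes each step an equivalence, which is what licenses reading the paper's argument (it only derives $\upsilon>2$ from the assumed inequality) in the direction the theorem actually states.
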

\begin{proof}
Assuming DoLF of HARR-M is larger than MoPV of clustering, then we have $kd_c\upsilon(\upsilon-1)/2>kd_c(\upsilon-1)$. By simplification, we have $\upsilon/2>1$, and thus $\upsilon>2$.
\end{proof}


\subsection{Time complexity}


It can be seen from Algorithm~\ref{alg:hlc} that under the same $n$, $d$, and $k$, HARR-V has the same time complexity as the conventional clustering algorithm with attributes weighting mechanism\cite{wkm}. {However, due to the representation of attributes, $\hat{d}$ of HARR-V will not be smaller than the original $d$, i.e., $\hat{d}-d=\sum_{r=d_u+1}^dv^r(v^r-1)/2-d_c\geq0$ as discussed in Section~\ref{subsct:pbr}. }But since $v^r$ is a very small integer ranging from 2 to 8 for most categorical attributes as shown in Table~\ref{tb:sta_cate}, $\hat{d}$ will not significantly increase the computation cost of HARR-V. The time complexity of HARR-V is analyzed below.

\begin{theorem}\label{the:timepp}
The time complexity of HARR-V is $O(d^2n+EInkd)$.
\end{theorem}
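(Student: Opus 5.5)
I will split HARR-V into its two phases, the one-off representation (Step 1) and the iterative clustering (Steps 2--5), and bound each separately. The bound $O(d^2n+EInkd)$ then follows by adding the two costs. Throughout, I read $I$ as the number of inner iterations of Steps 3--4 between consecutive weight updates, and $E$ as the number of outer iterations, i.e.\ the number of times Step 5 updates $\textbf{w}$. The numbers of possible values $v^r$ are treated as small constants, as argued just before the statement. Under that convention $\hat{d}=O(d)$ and $\gamma^r=O(1)$, so I will write $d$ for $\hat{d}$ in the final bound.

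\emph{Representation phase.} Computing the base distances $\kappa(o^r_g,o^r_h)$ in Eq.~(\ref{eq:cpddiff}) requires the conditional probabilities $p(o^s_j|o^r_g)$ of Eq.~(\ref{eq:cp}) for every ordered pair of attributes $(a^r,a^s)$. For a fixed pair, one scan over the $n$ objects fills the $v^r\times v^s$ co-occurrence table, which costs $O(n)$. Over all pairs this gives $O(d^2n)$; the discretized numerical attributes enter this count as ordinal ones. Once the tables exist, each $\kappa$ costs $O(\sum_s v^s)=O(d)$, and there are $O(d)$ such values in total, so this part is $O(d^2)$. The projections in Eqs.~(\ref{eq:pp1})--(\ref{eq:ppdist1}) are computed only on the $v^r$ possible values for each of the $\gamma^r$ spaces, so they are also $O(d)$ overall. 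The step that touches the data, namely rewriting each object's categorical values as coordinates in $\hat{A}$, costs $O(n\hat{d})=O(nd)$. The representation phase is therefore $O(d^2n)$.

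\emph{Clustering phase.} One execution of Step 3 evaluates $\Phi_w(\textbf{x}_i,\textbf{m}_l)$ in Eq.~(\ref{eq:dissim2}) for all $n$ objects and $k$ prototypes. Each evaluation is $O(\hat{d})$ because the projected distances are table lookups, giving $O(nk\hat{d})$. Step 4 computes means for numerical attributes and modes via Eq.~(\ref{eq:obj4}) for categorical ones, which costs $O(n\hat{d}+k\hat{d})$. Step 5 obtains all $D^r$ and $S^r$ in Eqs.~(\ref{eq:update3})--(\ref{eq:update4}) from the same object--prototype distances, at cost $O(nk\hat{d})$; normalizing in Eq.~(\ref{eq:update1}) then costs $O(\hat{d})$. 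One outer iteration therefore costs $O(Ink\hat{d})$, and $E$ of them cost $O(EInk\hat{d})=O(EInkd)$. Adding the two phases gives $O(d^2n+EInkd)$.

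The step I expect to be the main obstacle is justifying $\hat{d}=O(d)$. Strictly, $\hat{d}=d_u+\sum_r v^r(v^r-1)/2$, so the bound holds only if the $v^r$ are bounded by a constant. I will state this as an explicit assumption, supported by the earlier remark that $v^r$ typically lies between $2$ and $8$. A second point to state explicitly is that the $O(d^2n)$ statistics phase dominates all of its own preprocessing, including the discretization of numerical attributes, which is $O(n\log n)$ per attribute at most. I will either absorb that term by assuming equal-width binning, which is $O(n)$ per attribute, or note it separately.
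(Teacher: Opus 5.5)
Your proposal is correct and follows essentially the same route as the paper's proof. Both split the cost into the one-off representation phase (CPD counting in $O(d^2n)$, then base distances and projections computed over possible values only) and $E$ outer rounds, each costing $O(Ink\hat d)$ for Steps 3--4 plus $O(nk\hat d)$ for the weight update, and both treat the number of possible values as a constant so that $\hat d=O(d)$. The paper carries $\nu$ explicitly until the last line, whereas you drop it from the start. Your extra accounting of Step 4, the rewriting of objects into $\hat A$, and the discretization cost (which the paper silently omits) tightens the bookkeeping without changing the argument.
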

\begin{proof}
To obtain the worst-case time complexity, we assume that $X$ is a pure nominal data, i.e., $d=d_u$, and let $\nu=max(v^1,v^2,...,v^d)$. Then we separately prove the time complexity of three parts of HARR in the following.

(1) Projection-based representation (i.e., Step 1 of Algorithm~\ref{alg:hlc}): {To compute base distances using Eq.~(\ref{eq:cpddiff}), CPDs from each of the $d$ attributes given each of the $d\nu$ possible values should be obtained first by counting the $n$ attribute values with time complexity $O(d^2\nu n)$. Then, Eq.~(\ref{eq:cpddiff}) (with time complexity $O(d\nu))$ will be triggered $d\nu(\nu-1)/2$ times to compute the base distance between $\nu(\nu-1)/2$ pairs of intra-attribute possible values for each of the $d$ attributes. So the computation of base distances is with time complexity $O(d^2\nu^3)$. By using the prepared base distances, all the $\nu$ possible values of each of the $d$ attributes will be represented onto $\nu(\nu-1)/2$ one-dimensional spaces according to Eqs.~(\ref{eq:pp1}) and~(\ref{eq:pp2}), with time complexity $O(d\nu^3)$.} Therefore, the time complexity of the projection-based representation is $O(d^2\nu n+d^2\nu^3+d\nu^3)$.

(2) Learning of $\textbf{Q}$ and $M$ with fixed $\textbf{w}$ (i.e., Step 3 and 4 of Algorithm~\ref{alg:hlc}): {Let $I$ represent the number of iterations for making Step 3 and 4 converge. In each iteration, the distance between each of the $n$ objects and each of the $k$ clusters reflected by each of the $d\nu(\nu-1)/2$ represented sub-attributes is computed.} Therefore, time complexity for this part is $O(Inkd\nu^2)$.

(3) Computation of $\textbf{w}$ (i.e., Step 5 of Algorithm~\ref{alg:hlc}): {For each $a^r$ from the $d\nu(\nu-1)/2$ represented attributes, $D^r$ and $S^r$ should be computed by summing up $n$ and $n(k-1)$ distances according to Eqs.~(\ref{eq:update3}) and~(\ref{eq:update4}), respectively. As all the value-level distances have been prepared in the projection-based representation phase, the time complexity for computing all the $d\nu(\nu-1)/2$ weights by Eqs.~(\ref{eq:update1}) and~(\ref{eq:update2}) is $O(nkd\nu^2)$.}

{Let $E$ be the number of iterations for making the loop formed by Steps 3-5 converge (i.e., times for repeating the computation of part (2) and part (3) in this proof).} The time complexity of the HARR algorithm is $O(d^2\nu n+d^2\nu^3+d\nu^3+E(Inkd\nu^2+nkd\nu^2))$. Since $\nu$ is a very small constant ($\nu\leq8$ in most cases, see Table~\ref{tb:sta_cate}), the final time complexity of the HARR algorithm is $O(d^2n+EInkd)$.
\end{proof}

Similar to the proof of Theory~\ref{the:timepp}, HARR-M described in Algorithm~\ref{alg:hlc} is with same time complexity as HARR-V.

\section{Experiments}\label{sct:experiments}

We first introduce the experimental settings and then present the evaluation results with analysis.

\subsection{Experimental Settings}

Experimental design, datasets, counterparts to be compared, and validity indices are introduced in the following. All the experiments are coded by MATLAB R2021a. {Reported clustering results are the average performance of 20 runs of the compared approaches.}

\begin{table}[h]

\caption{{Statistics of the 14 real public datasets, including the number of numerical attributes $d_u$, categorical attributes $d_c$, nominal and ordinal categorical attributes $d_n$ and $d_o$, i.e., $d^c=d^n+d^o$, data objects $n$, true number of clusters $k^*$, and the number of possible values of categorical attributes.}}
\label{tb:sta_cate}
\centering
\resizebox{0.9\columnwidth}{!}{
\begin{tabular}{c|cc|cc|cc|cc|c}
\toprule
No. & Dataset & Abbrev.&$d_u$ & $d_c$ &$d_n$ &$d_o$& $n$ & $k^*$ & No. of Possible Values of Categorical Attributes \\
\midrule
1 & Inflammations Diagnosis & DS &1& 5& 5& \ 0 & \ 120 & \ 2 & \{2,\ 2,\ 2,\ 2,\ 2\} \\	
2 & Heart Failure & HF & 7 & 5& 5& \ 0 & \ 299 & \ 2 & \{2,\ 2,\ 2,\ 2,\ 2\} \\	
3 & Autism-Adolescent & AA & 2  & 7& 7& \ 0 & \ 104 & \ 2& \{2,\ 9,\ 2,\ 2,\ 33,\ 2,\ 6\} \\	
4 & Amphibians & AP & 2 & 12& 6& \ 6 & \ 189 & \ 2 &\{8,\ 5,\ 8,\ 7,\ 8,\ 3,\ 5,\ 6,\ 6,\ 6,\ 3,\ 2\}  \\	
5 & Dermatology  & DT & 1& 33& 1&  32 &  \ 358 & \ 6 &\{2,\ 4,\ 4,\ 4,\ 4,\ 4,\ 4,\ 4,\ 4,\ 4,\ 4,\ 4,\ 
 3,\ 4,\ 4,\ 4,\ 4,  \\	
  &  & & &  &&  & && \ 4,\ 4,\ 4,\ 4,\ 4,\ 4,\ 4,\ 4,\ 4,\ 4,\ 4,\ 4,\ 4,\ 4,\ 4,\ 4\}  \\
6 & Australia Credit & AC& 6 & 8& 1& \ 7 & \ 690 & \ 2 & \{2,\ 2,\ 2,\ 2,\ 3,\ 14,\ 8,\ 3\} \\	
7 & Soybean & SB& 0 & 35&  35& \ 0 & \ 266 & 15 & \{7,\ 2,\ 3,\ 3,\ 2,\ 4,\ 4,\ 3,\ 3,\ 3,\ 2,\ 2,\  3,\ 3,\ 3,\ 2,\ 2,\   \\	
  &  & & &  &&  & && 3,\ 2,\ 2,\ 4,\ 4,\ 2,\ 2,\  2,\ 3,\ 2,\ 3,\ 4,\ 2,\ 2,\ 2,\ 2,\ 2,\ 3\}  \\
8 & Solar Flare & SF& 0 & 9& 9& \ 0 &\ 323 & \ 6 & \{6,\ 4,\ 2,\ 3,\ 2,\ 2,\ 2,\ 2,\ 2\} \\	
9 & Tic-Tac-Toe  & T3& 0 & 9& 9& \ 0  & \ 958 & \ 2 & \{3,\ 3,\ 3,\ 3,\ 3,\ 3,\ 3,\ 3,\ 3\} \\	
10& Hayes-Roth & HR& 0 & 4& 2& \ 2 & \ 132 & \ 3 & \{3,\ 4,\ 4,\ 4\} \\	
11& Lymphography  & LG& 0 & 18&  15& \ 3 & \ 148 & \ 4 & \{3,\ 2,\ 2,\ 2,\ 2,\ 2,\ 2,\ 2,\ 3,\ 4,\ 4,\ 8,\ 3,\ 2,\ 2,\ 2,\ 4,\ 8\}  \\
12& Mushroom  & MR& 0 & 20&  20& \ 0 & 8124 & \ 2 & \{6,\ 4,\ 10,\ 2,\ 9,\ 2,\ 2,\ 2,\ 12,\ 2,\ 4,\ 4,\ 9,\ 9,\ 4,\ 3,\ 5,\ 9,\ 6,\ 7\} \\
{13} & {Lecturer Evaluation} & {LE} & {0} & {4} & {0} & {4} & {1000} & {5} & {\{5, 5, 5, 5\}} \\
{14} & {Social Works} & {SW} & {0} & {10} & {0} & {10} & {1000} & {4} & {\{3, 3, 4, 3, 4, 2, 3, 3, 3, 3\}} \\
\bottomrule
\end{tabular}}
\end{table}

\subsubsection{Experimental Design}

Three types of experiments have been designed to evaluate the proposed HARR-V, HARR-M, and their key technical components. We first compare the clustering performance with the conventional and state-of-the-art counterparts to illustrate the superiority of the proposed approaches. Then, two types of ablation studies are conducted to evaluate the effectiveness of key technical contributions of this work. The total object-cluster distances of HARR-V and HARR-M are also plotted to show their convergence and effectiveness.

\subsubsection{{Datasets}}

14 datasets are utilized for the experimental evaluation. All the datasets are real public ones obtained from the UCI machine learning repository\footnote{https://archive.ics.uci.edu} \cite{uci} to ensure the reliability and reproducibility of the experiments. Statistics of the datasets are shown in Table~\ref{tb:sta_cate}. The former six mixed datasets are utilized to validate the superiority of the proposed approaches in mixed data clustering. The latter eight categorical datasets are used to specifically verify the effectiveness of the proposed attribute representation and learning mechanisms, since the handling of categorical attributes is the key to mixed data clustering.

Categorical attributes of the 14 datasets can be divided into nominal type and ordinal type, and are with different numbers of possible values. All this statistical information is provided in Table~\ref{tb:sta_cate}. {Datasets AP, DT, AC, HR, LG, LE, and SW contain potential ordinal attributes, which are indicated by the official description files of the datasets, or the values of a categorical attribute have an obvious order relationship, e.g., an attribute with possible values \{low, mid, high\}.}

\begin{table*}[h]
\caption{{ARI performance of 12 different clustering approaches on six mixed datasets. The interval of ARI is $[-1,1]$, and a larger value of ARI indicates a better clustering performance. The best and second-best results on each dataset are highlighted in \textbf{bold} and \underline{underlined}, respectively. The experimental results of KMD and KPT are combined and reported as ``KMD/KPT''.}}
\label{tb:clustering_mix_ar}
\centering
\resizebox{0.8\columnwidth}{!}{
\begin{tabular}{l|cccccc}
\toprule	
Approach & 	DS& 	HF& 	AA& 	AP& 	DT& 	AC\\	
\midrule	
KMD/KPT& 0.2881$\pm$0.2718	& 0.0693$\pm$0.0662	& -0.0027$\pm$0.0125	& -0.0068$\pm$0.0129	& 0.4839$\pm$0.1044	& 0.3253$\pm$0.1025\\	
OHE+OC& 0.2745$\pm$0.3484	& 0.0014$\pm$0.0082	& -0.0042$\pm$0.0087	& -0.0162$\pm$0.0080	& 0.6402$\pm$0.1597	& 0.1412$\pm$0.2065\\	
SBC& 0.1728$\pm$0.1719	& -0.0022$\pm$0.0067	& -0.0106$\pm$0.0038	& -0.0073$\pm$0.0005	& 0.5837$\pm$0.1508	& 0.3622$\pm$0.0856\\	
JDM& 0.2243$\pm$0.1923	& 0.0039$\pm$0.0099	& -0.0028$\pm$0.0137	& -0.0043$\pm$0.0162	& 0.5926$\pm$0.1446	& 0.1451$\pm$0.1376\\	
CMS& 0.3815$\pm$0.1064	& 0.0782$\pm$0.0066	& 0.0018$\pm$0.0142	& -0.0034$\pm$0.0125	& 0.5099$\pm$0.0970	& 0.1530$\pm$0.0165\\	
UDM& \underline{0.3943$\pm$0.2863}	& \underline{0.0788$\pm$0.0061}	& 0.0069$\pm$0.0208	& -0.0005$\pm$0.0293	& 0.6207$\pm$0.1285	& \underline{0.3769$\pm$0.0436}\\	
HOD& \underline{0.3943$\pm$0.2863}	& \underline{0.0788$\pm$0.0061}	& 0.0018$\pm$0.0142	& -0.0016$\pm$0.0258	& 0.6207$\pm$0.1285	& \underline{0.3769$\pm$0.0436}\\	
{GWD} & {0.3178$\pm$0.2130} & {0.0951$\pm$0.1146} & {0.0422$\pm$0.2001} & {-0.0089$\pm$0.0303} & {0.0232$\pm$0.0185} & {0.2673$\pm$0.0592} \\
{GBD} & {0.5121$\pm$0.0962} & {0.0770$\pm$0.0051} & {0.0025$\pm$0.0216} & {-0.0044$\pm$0.0229} & {0.6191$\pm$0.1216} & {0.3683$\pm$0.0401} \\
{FBD} & {0.1799$\pm$0.2544} & {0.0717$\pm$0.0745} & {-0.0110$\pm$0.0051} & {\underline{0.0030$\pm$0.0000}} & {\underline{0.6439$\pm$0.1311}} & {0.0597$\pm$0.0412} \\
HARR-V& 0.3849$\pm$0.1897	& \textbf{0.0882$\pm$0.0000}	& \underline{0.0278$\pm$0.0283}	& -0.0016$\pm$0.0456	& 0.6403$\pm$0.1694	& 0.3453$\pm$0.0000\\	
HARR-M& \textbf{0.4599$\pm$0.2409}	& \textbf{0.0882$\pm$0.0000}	& \textbf{0.0289$\pm$0.0285}	& \textbf{0.0412$\pm$0.0534}	& \textbf{0.6826$\pm$0.1429}	& \textbf{0.4200$\pm$0.0000}\\	
\bottomrule	
\end{tabular} }
\end{table*}

{The utilized datasets come from many fields including finance, medicine, biology, etc.;} The proportion of different types of attributes varies greatly in the datasets; The scale of the datasets is not large, thus often cannot provide sufficient statistical information. {Therefore, analyzing these datasets remains very challenging for state-of-the-art techniques.}

\begin{table*}[h]
\caption{{ARI performance of 12 different clustering approaches on eight categorical datasets. The interval of ARI is $[-1,1]$, and a larger value of ARI indicates a better clustering performance. The best and second-best results on each dataset are highlighted in \textbf{bold} and \underline{underlined}, respectively. The experimental results of KMD and KPT are combined and reported as ``KMD/KPT''.}}
\label{tb:clustering_cate_ar}
\centering
\resizebox{1\columnwidth}{!}{
\begin{tabular}{l|cccccccc}
\toprule	
Approach & 	SB& 	SF& 	T3& 	HR& 	LG& 	MR & LE & SW\\	
\midrule	
KMD/KPT& 0.3185$\pm$0.0362	& 0.2083$\pm$0.0649	& 0.0247$\pm$0.0318	& -0.0054$\pm$0.0085	& 0.0683$\pm$0.0986	& 0.3420$\pm$0.2311 &{0.0353$\pm$0.0180} &{0.0485$\pm$0.0168}\\	
OHE+OC& 0.3913$\pm$0.0446	& 0.1961$\pm$0.0509	& 0.0109$\pm$0.0098	& 0.0099$\pm$0.0252	& 0.0892$\pm$0.0563	& 0.4629$\pm$0.2502&{0.0343$\pm$0.0207} &{0.0004$\pm$0.0017}\\	
SBC& 0.3871$\pm$0.0254	& 0.1483$\pm$0.0333	& \underline{0.0297$\pm$0.0182}	& -0.0097$\pm$0.0098	& 0.0065$\pm$0.0029	& 0.4591$\pm$0.2494&{0.0295$\pm$0.0127} &{0.0577$\pm$0.0115}\\	
JDM& 0.3826$\pm$0.0366	& 0.1225$\pm$0.0355	& 0.0244$\pm$0.0267	& -0.0011$\pm$0.0086	& 0.0395$\pm$0.0602	& 0.3191$\pm$0.1839&{0.0359$\pm$0.0115} &{0.0506$\pm$0.0191}\\	
CMS& 0.3356$\pm$0.0449	& 0.2972$\pm$0.0660	& 0.0171$\pm$0.0217	& -0.0011$\pm$0.0063	& 0.0824$\pm$0.0759	& 0.4210$\pm$0.2690&{0.0384$\pm$0.0193} &{0.0608$\pm$0.0159}\\	
UDM& 0.3673$\pm$0.0411	& 0.2800$\pm$0.0703	& 0.0178$\pm$0.0250	& 0.0340$\pm$0.0354	& 0.0860$\pm$0.0788	& 0.3675$\pm$0.2519&{0.0425$\pm$0.0185} &{0.0620$\pm$0.0132}\\	
HOD& 0.3686$\pm$0.0397	& 0.2799$\pm$0.0848	& 0.0178$\pm$0.0250	& 0.0325$\pm$0.0386	& 0.0762$\pm$0.0854	& 0.3626$\pm$0.2348&{0.0425$\pm$0.0185} &{0.0661$\pm$0.0130}\\	
{GWD} & {0.2530$\pm$0.0520} & {0.1829$\pm$0.0551} & {0.0177$\pm$0.0210} & {0.0103$\pm$0.0222} & {0.0619$\pm$0.0590} & {0.1371$\pm$0.1649} & {0.0361$\pm$0.0207} & {0.0379$\pm$0.0137} \\
{GBD} & {0.3574$\pm$0.0416} & {0.2395$\pm$0.0688} & {0.0147$\pm$0.0223} & {0.0328$\pm$0.0413} & {0.1012$\pm$0.0471} & {0.3197$\pm$0.2474} & {\underline{0.0435$\pm$0.0190}} & {0.0627$\pm$0.0128} \\
{FBD} & {0.3377$\pm$0.0457} & {0.1996$\pm$0.0552} & {0.0198$\pm$0.0013} & {0.0390$\pm$0.0217} & {0.1530$\pm$0.0274} & {0.3795$\pm$0.2169} & {0.0323$\pm$0.0176} & {0.0598$\pm$0.0048} \\	
HARR-V& \underline{0.4196$\pm$0.0441}	& \underline{0.3216$\pm$0.0628}	& 0.0211$\pm$0.0273	& \textbf{0.0580$\pm$0.0211}	& \underline{0.1687$\pm$0.0226}	& \underline{0.5667$\pm$0.1742}&{\textbf{0.0615$\pm$0.0415}} &{\textbf{0.0880$\pm$0.0136}}\\	
HARR-M& \textbf{0.4367$\pm$0.0495}	& \textbf{0.3254$\pm$0.0748}	& \textbf{0.0338$\pm$0.0252}	& \underline{0.0524$\pm$0.0228}	& \textbf{0.1849$\pm$0.0000}	& \textbf{0.6122$\pm$0.0678}&{\textbf{0.0615$\pm$0.0345}} &{\underline{0.0778$\pm$0.0215}}\\	
\bottomrule	
\end{tabular}}
\end{table*}
\subsubsection{Counterparts}

Twelve clustering approaches are compared including ten counterparts and the proposed HARR-V and HARR-M. The conventional k-means \cite{kms} clustering algorithm is combined with One-Hot Encoding (OHE, for nominal attributes) and Order Coding (OC, for ordinal attributes) to be a counterpart (OHE+OC), where OC encodes the values of an ordinal attribute into their order values, and then normalizes them. OHE+OC is one of the most straightforward and common solutions for practical mixed data clustering. Two other variants of k-means, i.e., k-modes (KMD) \cite{kmd} and k-prototypes (KPT) \cite{kpt} are also chosen. {As KMD is feasible for pure categorical data and KPT is designed for mixed data, the experimental results of KMD and KPT are combined and reported as ``KMD/KPT''. GoWer's Distance(GWD) \cite{Gower}, Structure-Based Categorical (SBC) data encoding \cite{sbc}, Jia's Distance Metric (JDM) \cite{jdm}, Coupled Metric Similarity (CMS) \cite{cms}, Unified Distance Metric (UDM) \cite{udm}, HOmogeneous Distance (HOD) metric \cite{hd-ndw}, Graph-Based Distance (GBD) metric \cite{Adc}, and Forest-Based Distance(FBD) metric \cite{COForest}, combined with KMD and KPT for categorical and mixed data, respectively, have also been chosen as counterparts.}

\subsubsection{Validity Indices}

To comprehensively evaluate the clustering performance of compared approaches, we adopt two popular validity indices, i.e., Adjusted Rand Index (ARI) \cite{ex2} and Clustering Accuracy (CA) \cite{ex1} for the evaluation. CA is a conventional index in the interval $[0,1]$, which first searches the optimal mapping between label and clustering results, and then measures the accuracy. ARI in the interval $[-1,1]$ is a more discriminative index based on the conventional Rand Index \cite{ex3}\cite{ex4}. {For both ARI and CA, a larger value indicates a better clustering performance.}

\subsection{Comparison of Clustering Performance}\label{subsct:ex_clustering}

\begin{table*}[!t]
\caption{{CA performance of 12 different clustering approaches on six mixed datasets. The interval of CA is $[0,1]$, and a larger value of CA indicates a better clustering performance. The best and second-best results on each dataset are highlighted in \textbf{bold} and \underline{underlined}, respectively. The experimental results of KMD and KPT are combined and reported as ``KMD/KPT''.}}
\label{tb:clustering_mix_ca}
\centering
\resizebox{0.8\columnwidth}{!}{
\begin{tabular}{l|cccccc}
\toprule	
Approach & 	DS& 	HF& 	AA& 	AP& 	DT& 	AC \\	
\midrule	
KMD/KPT& 0.7375$\pm$0.1352	& 0.6224$\pm$0.0668	& 0.5361$\pm$0.0330	& 0.5299$\pm$0.0198	& 0.5964$\pm$0.0893	& 0.7784$\pm$0.0658\\	
OHE+OC& 0.7196$\pm$0.1527	& 0.5415$\pm$0.0330	& 0.5322$\pm$0.0283	& 0.5079$\pm$0.0097	& 0.6753$\pm$0.1240	& 0.6267$\pm$0.1442\\	
SBC& 0.6846$\pm$0.1085	& 0.5251$\pm$0.0235	& 0.5173$\pm$0.0067	& 0.5410$\pm$0.0024	& 0.6264$\pm$0.1276	& 0.7946$\pm$0.0646\\	
JDM& 0.7154$\pm$0.1105	& 0.5580$\pm$0.0259	& 0.5385$\pm$0.0391	& 0.5479$\pm$0.0326	& 0.6536$\pm$0.1137	& 0.6633$\pm$0.1031\\	
CMS& 0.7686$\pm$0.0479	& 0.6436$\pm$0.0054	& 0.5673$\pm$0.0099	& 0.5397$\pm$0.0284	& 0.6096$\pm$0.0768	& 0.6971$\pm$0.0102\\	
UDM& 0.7900$\pm$0.1302	& \underline{0.6441$\pm$0.0050}	& \underline{0.5745$\pm$0.0202}	& 0.5614$\pm$0.0402	& 0.6679$\pm$0.0995	& \underline{0.8069$\pm$0.0175}\\	
HOD& 0.7900$\pm$0.1302	& \underline{0.6441$\pm$0.0050}	& 0.5673$\pm$0.0099	& 0.5569$\pm$0.0349	& 0.6679$\pm$0.0995	& \underline{0.8069$\pm$0.0175}\\
{GWD} & {0.7433$\pm$0.1137} & {0.6492$\pm$0.0950} & {0.5654$\pm$0.1039} & {0.5550$\pm$0.0441} & {0.2824$\pm$0.0261} & {0.7570$\pm$0.0336} \\
{GBD} & {0.7767$\pm$0.0447} & {0.6428$\pm$0.0043} & {0.5611$\pm$0.0285} & {\underline{0.5640$\pm$0.0321}} & {0.6662$\pm$0.0899} & {0.8034$\pm$0.0161} \\
{FBD} & {0.6908$\pm$0.1371} & {0.6284$\pm$0.0602} & {0.5120$\pm$0.0162} & {0.5450$\pm$0.0000} & {\underline{0.6828$\pm$0.1001}} & {0.6154$\pm$0.0494} \\
	
HARR-V& \underline{0.7942$\pm$0.1084}	& \textbf{0.6522$\pm$0.0000}	& \textbf{0.6062$\pm$0.0305}	& 0.5550$\pm$0.0561	& 0.6793$\pm$0.1411	& 0.7942$\pm$0.0000\\	
HARR-M& \textbf{0.8187$\pm$0.1247}	& \textbf{0.6522$\pm$0.0000}	& \textbf{0.6062$\pm$0.0302}	& \textbf{0.6111$\pm$0.0535}	& \textbf{0.7161$\pm$0.1130}	& \textbf{0.8246$\pm$0.0000}\\	
\bottomrule	
\end{tabular}}
\end{table*}

\begin{table*}[!t]
\caption{{CA performance of 12 different clustering approaches on eight categorical datasets. The interval of CA is $[0,1]$, and a larger value of CA indicates a better clustering performance. The best and second-best results on each dataset are highlighted in \textbf{bold} and \underline{underlined}, respectively. The experimental results of KMD and KPT are combined and reported as ``KMD/KPT''. }}
\label{tb:clustering_cate_ca}
\centering
\resizebox{1\columnwidth}{!}{
\begin{tabular}{l|cccccccc}
\toprule	
Approach & 	SB& 	SF& 	T3& 	HR & LG & MR & LE & SW	\\	
\midrule	
KMD/KPT& 0.5021$\pm$0.0436	& 0.4707$\pm$0.0523	& 0.5692$\pm$0.0499	& 0.3826$\pm$0.0240	& 0.6028$\pm$0.0933	& 0.7645$\pm$0.1282 &{0.3276$\pm$0.0344} & {0.3777$\pm$0.0348}\\	
OHE+OC& 0.5348$\pm$0.0433	& 0.4506$\pm$0.0393	& 0.5548$\pm$0.0266	& 0.3989$\pm$0.0561	& 0.6444$\pm$0.0568	& 0.8167$\pm$0.1277 &{0.3455$\pm$0.1500} &{0.0210$\pm$0.0941}\\
SBC& 0.5368$\pm$0.0300	& 0.4130$\pm$0.0284	& \underline{0.5825$\pm$0.0378}	& 0.3591$\pm$0.0349	& 0.5588$\pm$0.0062	& 0.8145$\pm$0.1295 &{0.3172$\pm$0.0263} &{0.3812$\pm$0.0195}\\	
JDM& 0.5417$\pm$0.0412	& 0.4046$\pm$0.0306	& 0.5664$\pm$0.0455	& 0.3932$\pm$0.0269	& 0.5842$\pm$0.0707	& 0.7613$\pm$0.1101 &{0.3142$\pm$0.0240} &{0.3738$\pm$0.0274}\\	
CMS& 0.5177$\pm$0.0476	& 0.5042$\pm$0.0484	& 0.5551$\pm$0.0399	& 0.3962$\pm$0.0209	& 0.6282$\pm$0.0783	& 0.7958$\pm$0.1368 &{0.3274$\pm$0.0379} &{0.3866$\pm$0.0292}\\	
UDM& 0.5376$\pm$0.0391	& 0.5020$\pm$0.0532	& 0.5544$\pm$0.0433	& 0.4470$\pm$0.0521	& 0.6289$\pm$0.0828	& 0.7757$\pm$0.1296 &{0.3324$\pm$0.0391} &{0.3819$\pm$0.0231}\\	
HOD& 0.5393$\pm$0.0405	& 0.5050$\pm$0.0641	& 0.5543$\pm$0.0434	& 0.4405$\pm$0.0568	& 0.6190$\pm$0.0833	& 0.7780$\pm$0.1190 &{0.3324$\pm$0.0391} & {0.3880$\pm$0.0257}\\
{GWD} & {0.4462$\pm$0.0505} & {0.4515$\pm$0.0550} & {0.5643$\pm$0.0389} & {0.4163$\pm$0.0384} & {0.6176$\pm$0.0611} & {0.6430$\pm$0.1208} & {0.3352$\pm$0.0319} & {0.3721$\pm$0.0254} \\
{GBD} & {0.5295$\pm$0.0401} & {0.4873$\pm$0.0577} & {0.5478$\pm$0.0419} & {0.4383$\pm$0.0599} & {0.6588$\pm$0.0447} & {0.7523$\pm$0.1310} & {0.3341$\pm$0.0320} & {0.3810$\pm$0.0199} \\
{FBD} & {0.5199$\pm$0.0459} & {0.4565$\pm$0.0444} & {0.5784$\pm$0.0007} & {0.3905$\pm$0.0376} & {0.6982$\pm$0.0211} & {0.7827$\pm$0.0604} & {0.2912$\pm$0.0233} & {0.3887$\pm$0.0172} \\

HARR-V& \underline{0.5492$\pm$0.0408}	& \underline{0.5127$\pm$0.0540}	& 0.5688$\pm$0.0386	& \textbf{0.4883$\pm$0.0409}	& \underline{0.7085$\pm$0.0138}	& \underline{0.8642$\pm$0.0976} &{\underline{0.3658$\pm$0.0585}} &{\underline{0.4089$\pm$0.0246}}\\
HARR-M& \textbf{0.5564$\pm$0.0533}	& \textbf{0.5139$\pm$0.0605}	& \textbf{0.5863$\pm$0.0405}	& \underline{0.4674$\pm$0.0416}	& \textbf{0.7183$\pm$0.0000}	& \textbf{0.8905$\pm$0.0246} &{\textbf{0.3687$\pm$0.0463}} & {\textbf{0.4095$\pm$0.0260}}\\
\bottomrule	
\end{tabular}}
\end{table*}

Clustering performance of the compared approaches is shown in {Tables}~\ref{tb:clustering_mix_ar} - \ref{tb:clustering_cate_ca}. In general, HARR-M outperforms all the counterparts, and HARR-V is very competitive in most comparisons. More detailed observations are provided in the following: (1) It can be seen that HARR-M outperforms HARR-V in general. This is because that HARR-M more finely considers the contribution of representation to the formation of different clusters; (2) Superiority of the performance of HARR-M and HARR-V is more obvious on mixed data than on categorical data because the advantage of the proposed representation method that homogeneously represents heterogeneous numerical and categorical attributes will surely be weakened on datasets composed of only categorical attributes; (3) Results in Tables~\ref{tb:clustering_cate_ar} and~\ref{tb:clustering_cate_ca} show that both HARR-V and HARR-M perform well on categorical datasets.

Based on the results shown in Tables~\ref{tb:clustering_mix_ar} - \ref{tb:clustering_cate_ca}, we also compute: (1) the average ARI and CA performance, and (2) the ARI and CA performance rankings, on the datasets for all the counterparts to facilitate an intuitive comparison in Figure~\ref{fig:sta_clustering}. To better organize the histogram results, KMD/KPT, OHE+OC, SBC, JDM, CMS, UDM, HOD, HARR-V, and HARR-M, are interchangeably denoted as A, B, ..., G, H-V, H-M, respectively, hereinafter. The standard deviation of average ARI and CA are not reported because the clustering performance on different datasets may be very different. It can be observed that HARR-V and HARR-M perform obviously better than the other counterparts. Moreover, HARR-M always ranks the first with a very small standard deviation, which indicates {its} superiority and stability.

\begin{figure}[h]
\centering
\subfigure[\footnotesize{Average ARI performance and average ranks.}]{
    \includegraphics[width=1.42in]{ 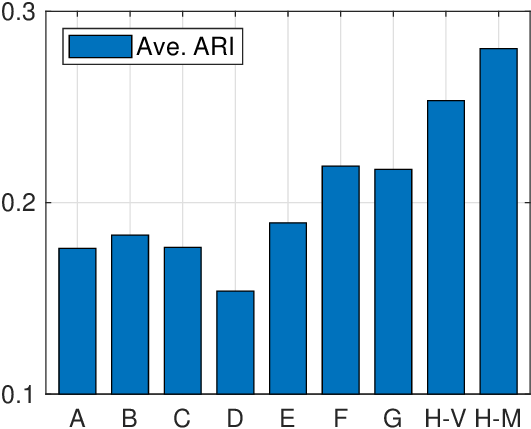}
    \centering
    \includegraphics[width=1.4in]{ 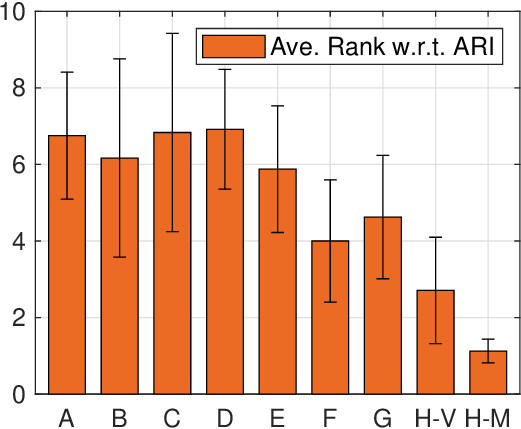}
}
\subfigure[\footnotesize{Average CA performance and average ranks.}]{
    \includegraphics[width=1.42in]{ 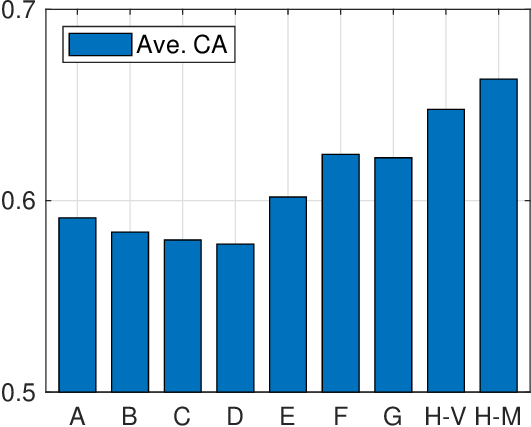}
    \centering
    \includegraphics[width=1.4in]{ 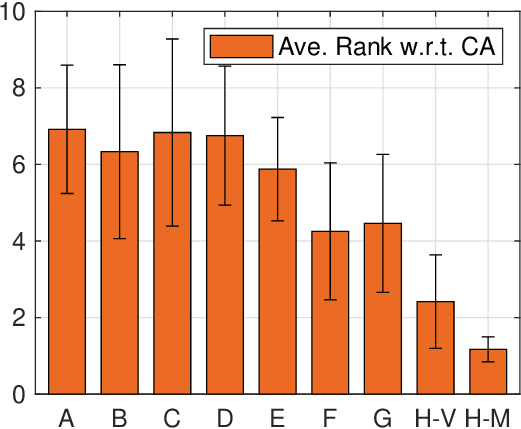}
}
 
\caption{{Average performance and ranks of different approaches based on the results in Tables~\ref{tb:clustering_mix_ar} - \ref{tb:clustering_cate_ca}. The methods KMD/KPT, OHE+OC, SBC, JDM, CMS, UDM, HOD, HARR-V, and HARR-M are denoted as A, B, C, D, E, F, G, H-V, and H-M, respectively.}}	
\label{fig:sta_clustering}	
\end{figure}
\begin{figure}[h]	
 
\centerline{\includegraphics[width=5.8in]{ 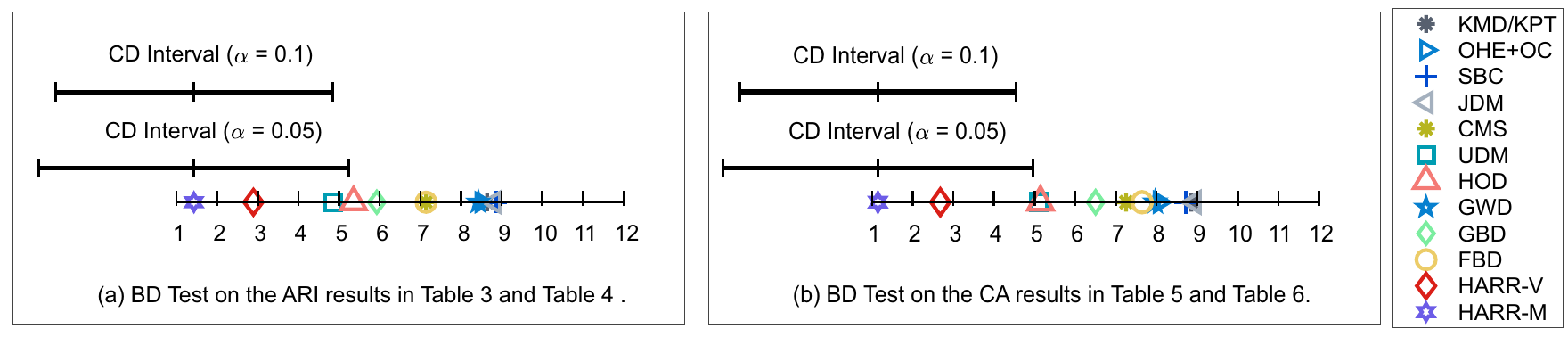}}  
\caption{{Results of the two-tailed BD tests on ARI (a) and CA (b) performances across 12 methods on 14 datasets, based on the results from  {Tables}~\ref{tb:clustering_mix_ar}-\ref{tb:clustering_cate_ca}. Confidence intervals are set at 0.95 (\(\alpha = 0.05\)) and 0.9 (\(\alpha = 0.1\)), the corresponding Critical Difference (CD) intervals CD\_95\%=3.8158 and CD\_90\%=3.4069, respectively.  }}	
\label{fig:sig_all}
\end{figure}

{To confirm that improvements over baseline methods are significant and consistent, we first implement the Friedman~\cite{friedman} test on the ARI and CA results in Tables~\ref{tb:clustering_mix_ar} - \ref{tb:clustering_cate_ca}.} The corresponding p-values are 0.00000011 and 0.0000013 respectively, both passing the test under the confidence of 99\% (p-value $= 0.01$). On this basis, results of the two-tailed BD test~\cite{r1test} with confidence intervals 0.95 ($\alpha = 0.05$) and 0.9 ($\alpha = 0.1$) are shown in Figure \ref{fig:sig_all}, with the corresponding Critical Difference (CD) intervals CD\_95\%=3.8158 and CD\_90\%=3.4069, respectively, for comparing 12 methods across 14 datasets. As can be seen from Figure~\ref{fig:sig_all}, all 10 compared methods fall outside the right boundary of the CD intervals, except for the UDM method w.r.t. ARI performance under $\alpha=0.05$. But it is worth mentioning that UDM is very close to the boundary of $\alpha=0.05$ and stays outside the boundary of $\alpha=0.1$. In general, the test results indicate that our method, HARR-M,  significantly outperforms all {the} other counterparts except for the other version of our method, HARR-V, which achieves comparable performance.

%


\begin{table}[h]
\caption{{ARI performance of five ablated HARR versions on 14 datasets. The interval of ARI is $[-1,1]$, and a larger value of ARI indicates a better clustering performance. $\overline{AR}$ row reports the average ranks.}}
\label{tb:ablation_component_ar}
\centering
\resizebox{0.7\columnwidth}{!}{
\begin{tabular}{c|ccccc}
\toprule	
Data & 	KMD/KPT& 	BD& 	HAR& 	HARR-V& 	HARR-M\\	
\midrule	
DS& 0.2881$\pm$0.2718	& 0.3374$\pm$0.2798	& 0.3374$\pm$0.2798	& 0.3849$\pm$0.1897	& 0.4599$\pm$0.2409\\	
HF& 0.0693$\pm$0.0662	& 0.0793$\pm$0.0181	& 0.0793$\pm$0.0181	& 0.0882$\pm$0.0000	& 0.0882$\pm$0.0000\\	
AA& -0.0027$\pm$0.0125	& -0.0030$\pm$0.0106	& 0.0430$\pm$0.0313	& 0.0278$\pm$0.0283	& 0.0289$\pm$0.0285\\	
AP& -0.0068$\pm$0.0129	& -0.0020$\pm$0.0288	& -0.0002$\pm$0.0300	& -0.0016$\pm$0.0456	& 0.0412$\pm$0.0534\\	
DT& 0.4839$\pm$0.1044	& 0.6322$\pm$0.1343	& 0.6322$\pm$0.1343	& 0.6403$\pm$0.1694	& 0.6826$\pm$0.1429\\	
AC& 0.3253$\pm$0.1025	& 0.3773$\pm$0.0455	& 0.3773$\pm$0.0455	& 0.3453$\pm$0.0000	& 0.4200$\pm$0.0000\\	
SB& 0.3185$\pm$0.0362	& 0.3686$\pm$0.0397	& 0.3912$\pm$0.0355	& 0.4196$\pm$0.0441	& 0.4367$\pm$0.0495\\	
SF& 0.2083$\pm$0.0649	& 0.2799$\pm$0.0848	& 0.3090$\pm$0.0879	& 0.3216$\pm$0.0628	& 0.3254$\pm$0.0748\\	
T3& 0.0247$\pm$0.0318	& 0.0178$\pm$0.0250	& 0.0196$\pm$0.0226	& 0.0211$\pm$0.0273	& 0.0338$\pm$0.0252\\	
HR& -0.0054$\pm$0.0085	& 0.0325$\pm$0.0386	& 0.0244$\pm$0.0177	& 0.0580$\pm$0.0211	& 0.0524$\pm$0.0228\\	
LG& 0.0683$\pm$0.0986	& 0.0762$\pm$0.0854	& 0.1187$\pm$0.0676	& 0.1687$\pm$0.0226	& 0.1849$\pm$0.0000\\	
MR& 0.3420$\pm$0.2311	& 0.3626$\pm$0.2348	& 0.4165$\pm$0.2588	& 0.5667$\pm$0.1742	& 0.6122$\pm$0.0678\\	
{LE} & {0.0353$\pm$0.0180} & {0.0425$\pm$0.0185} & {0.0439$\pm$0.0167} & {0.0615$\pm$0.0415} & {0.0615$\pm$0.0345} \\
{SW} & {0.0485$\pm$0.0168} & {0.0661$\pm$0.0130} & {0.0809$\pm$0.0135} & {0.0880$\pm$0.0136} & {0.0778$\pm$0.0215} \\

\midrule\midrule	
$\overline{AR}$ & {4.7143$\pm$0.8254} & {3.8571$\pm$0.6630} & {2.9286$\pm$0.8287} & {2.1071$\pm$0.8810} & {1.3929$\pm$0.6257} \\	
\bottomrule	

\end{tabular}
}
\end{table}

\subsection{Ablation Studies}\label{subsct:ex_ablation}

To evaluate the effectiveness of the key technical components proposed in this paper, the performance of different ablated versions of HARR-V and HARR-M is compared in Tables~\ref{tb:ablation_component_ar} and~\ref{tb:ablation_component_ca}. We combine the Homogeneous Attribute Representation (HAR) proposed in Section~\ref{subsct:pbr} with KMD/KPT (without representation learning), and compare its performance with HARR-V and HARR-M to illustrate the effectiveness of the weights updating mechanisms proposed in Section~\ref{subsct:ls}. Performance of the Base Distance (BD) described by Eq.~(\ref{eq:cpddiff}) and Figure~\ref{fig:dist_indication} combined with KMD/KPT is compared with HAR to verify the effectiveness of the projection mechanism of the representation. Performance of KMD/KPT is also reported for completeness.

\newcommand{\mylinespace}{41mm}

In Tables~\ref{tb:ablation_component_ar} and~\ref{tb:ablation_component_ca}, to make the results easy to observe, the row of Average Rank ($\overline{AR}$) reports the average performance ranks of compared approaches, and the row of Improved Percentage ($\overline{IP}$) reports the average improvement each approach achieves over its left approach. Since ARI is in the interval [-1,1], $\overline{IP}$ results are not reported in Table~\ref{tb:ablation_component_ar}. It can be observed that the performance of KMD/KPT, BD, HAR, HARR-V, and HARR-M increases in general. {More specifically, the following four observations: (1) BD outperforms KMD/KPT, (2) HAR outperforms BD, (3) HARR-V outperforms HAR, and (4) HARR-M outperforms HARR-V, can clearly verify the effectiveness of: (1) base distance adopted by the projection, (2) projection-based representation, (3) weights learning mechanism of HARR-V, and (4) weights learning mechanism of HARR-M, respectively.}

\begin{table}[h]
\caption{{CA performance of five ablated HARR versions on 14 datasets. The interval of CA is $[0,1]$, and a larger value of CA indicates a better clustering performance. $\overline{AR}$ row reports the average ranks. $\overline{IP}$ row reports the improved percentage achieved by the current one w.r.t. the previous one. In Table~\ref{tb:ablation_component_ar}, since ARI is in the interval $[-1,1]$, $\overline{IP}$ results are not reported. }}
\label{tb:ablation_component_ca}
\centering
\resizebox{0.7\columnwidth}{!}{
\begin{tabular}{c|ccccc}
\toprule	
Data & 	KMD/KPT& 	BD& 	HAR& 	HARR-V& 	HARR-M\\	
\midrule	
DS& 0.7375$\pm$0.1352	& 0.7650$\pm$0.1289	& 0.7650$\pm$0.1289	& 0.7942$\pm$0.1084	& 0.8187$\pm$0.1247\\	
HF& 0.6224$\pm$0.0668	& 0.6436$\pm$0.0145	& 0.6436$\pm$0.0145	& 0.6522$\pm$0.0000	& 0.6522$\pm$0.0000\\	
AA& 0.5361$\pm$0.0330	& 0.5404$\pm$0.0302	& 0.6197$\pm$0.0475	& 0.6062$\pm$0.0305	& 0.6062$\pm$0.0302\\	
AP& 0.5299$\pm$0.0198	& 0.5561$\pm$0.0381	& 0.5614$\pm$0.0446	& 0.5550$\pm$0.0561	& 0.6111$\pm$0.0535\\	
DT& 0.5964$\pm$0.0893	& 0.6811$\pm$0.1071	& 0.6811$\pm$0.1071	& 0.6793$\pm$0.1411	& 0.7161$\pm$0.1130\\	
AC& 0.7784$\pm$0.0658	& 0.8070$\pm$0.0182	& 0.8070$\pm$0.0182	& 0.7942$\pm$0.0000	& 0.8246$\pm$0.0000\\	
SB& 0.5021$\pm$0.0436	& 0.5393$\pm$0.0405	& 0.5258$\pm$0.0457	& 0.5492$\pm$0.0408	& 0.5564$\pm$0.0533\\	
SF& 0.4707$\pm$0.0523	& 0.5050$\pm$0.0641	& 0.5053$\pm$0.0770	& 0.5127$\pm$0.0540	& 0.5139$\pm$0.0605\\	
T3& 0.5692$\pm$0.0499	& 0.5543$\pm$0.0434	& 0.5602$\pm$0.0405	& 0.5688$\pm$0.0386	& 0.5863$\pm$0.0405\\	
HR& 0.3826$\pm$0.0240	& 0.4405$\pm$0.0568	& 0.4496$\pm$0.0402	& 0.4883$\pm$0.0409	& 0.4674$\pm$0.0416\\	
LG& 0.6028$\pm$0.0933	& 0.6190$\pm$0.0833	& 0.6687$\pm$0.0551	& 0.7085$\pm$0.0138	& 0.7183$\pm$0.0000\\	
MR& 0.7645$\pm$0.1282	& 0.7780$\pm$0.1190	& 0.7940$\pm$0.1368	& 0.8642$\pm$0.0976	& 0.8905$\pm$0.0246\\	
{LE} & {0.3276$\pm$0.0344} & {0.3324$\pm$0.0391} & {0.3330$\pm$0.0324} & {0.3658$\pm$0.0585} & {0.3687$\pm$0.0463} \\
{SW} & {0.3777$\pm$0.0348} & {0.3880$\pm$0.0257} & {0.4011$\pm$0.0233} & {0.4089$\pm$0.0246} & {0.4095$\pm$0.0260} \\

\midrule\midrule	
$\overline{AR}$ & {4.7857$\pm$0.8018} & {3.6429$\pm$0.6914} & {2.9286$\pm$0.7810} & {2.4286$\pm$0.9579} & {1.2143$\pm$0.4688} \\
$\overline{IP} $& {-} & {5.5546$\%$} & {2.4933$\%$} & {3.5547$\%$} & {2.3478$\%$} \\

\bottomrule	

\end{tabular}
}
\end{table}

{This work distinguishes categorical attributes into nominal and ordinal ones.} To evaluate the necessity of doing so, the clustering performance of approaches is compared by treating potential ordinal attributes in the following two ways: (1) distinguish nominal and ordinal attributes, and (2) treat all the categorical attributes as nominal ones. Note that only the approaches feasible for distinguishing nominal and ordinal attributes are compared here. According to the results visualized in Figure~\ref{fig:switch}, treating nominal and ordinal attributes differently can obviously boost clustering performance in most cases, which confirms the necessity of our distinction between ordinal and nominal attributes.

\subsection{Learning Process Evaluation}

The values of total object-cluster distance $z=\sum_{i=1}^n\sum_{l=1}^{k^*}q_{il}\Phi_w(\textbf{x}_i,\textbf{m}_l)$ produced by HARR-V and HARR-M on the six mixed datasets are plotted in Figure~\ref{fig:obj_val}. The x-axis is the number of learning iterations, and the y-axis is the value of $z$. The circles and boxes indicate that the weights were updated by HARR-V and HARR-M in the corresponding iterations, respectively.

It can be observed from the figures that the $z$ values of HARR-V and HARR-M monotonically decrease on all the datasets, and both the two algorithms always converge quickly within 15 iterations. This indicates that although HARR-V and HARR-M need to learn the representations of $\hat{d}$ attributes ($\hat{d}\geq d$), they are still very efficient, which is a very important characteristic in practical applications. {It can also be observed that the $z$ values of HARR-M are never higher than the $z$ values of HARR-V at convergence, which again indicates the effectiveness of the weights updating strategy of HARR-M.}

\begin{figure*}[h]
\newcommand{\mywd}{1.2in}
\newcommand{\mylwd}{0.188}
\centering
\subfigure{
\begin{minipage}{\mylwd\linewidth}
\centering
\includegraphics[width=\mywd]{ 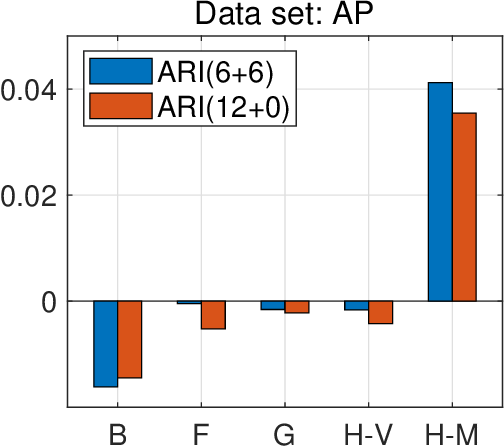}
\end{minipage}}
\subfigure{
\begin{minipage}{\mylwd\linewidth}
\centering
\includegraphics[width=\mywd]{ 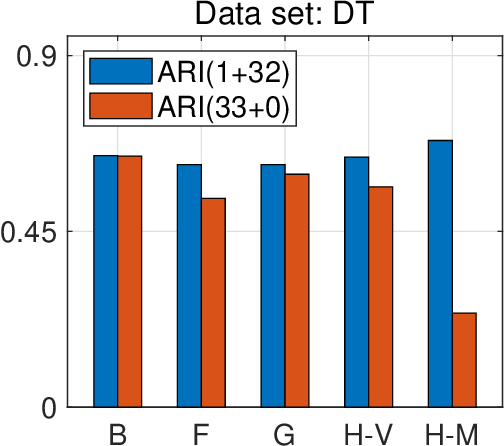}
\end{minipage}}
\subfigure{
\begin{minipage}{\mylwd\linewidth}
\centering
\includegraphics[width=\mywd]{ 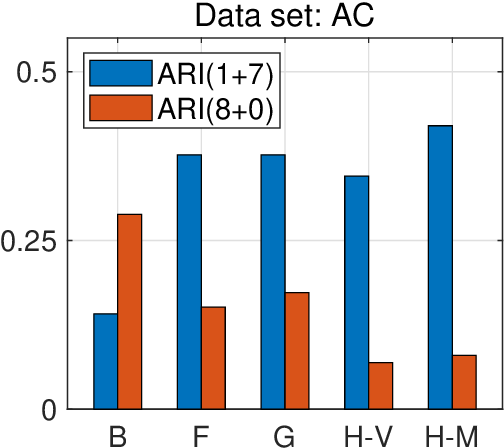}
\end{minipage}}
\subfigure{
\begin{minipage}{\mylwd\linewidth}
\centering
\includegraphics[width=\mywd]{ 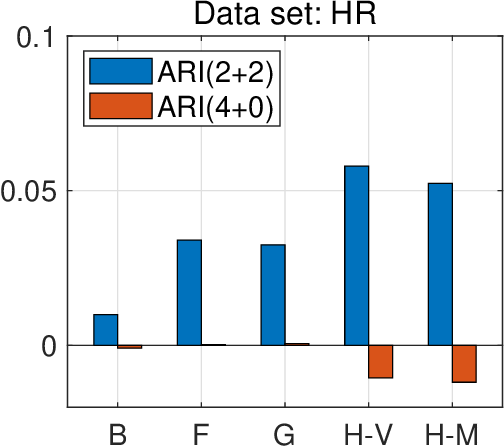}
\end{minipage}}
\subfigure{
\begin{minipage}{\mylwd\linewidth}
\centering
\includegraphics[width=\mywd]{ 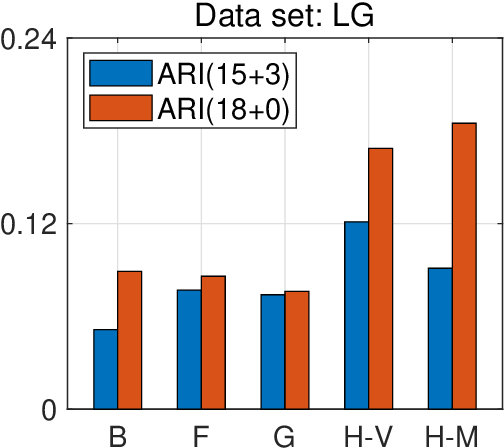}
\end{minipage}}
\subfigure{
\begin{minipage}{\mylwd\linewidth}
\centering
\includegraphics[width=\mywd]{ 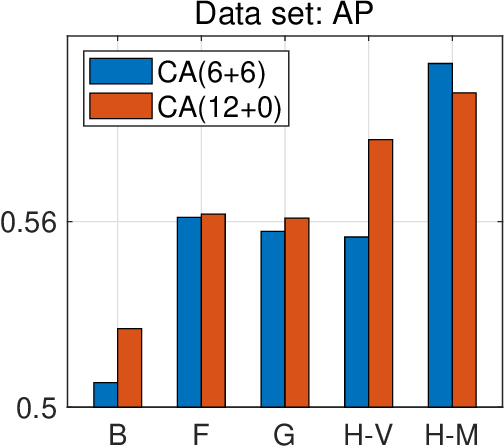}
\end{minipage}}
\subfigure{
\begin{minipage}{\mylwd\linewidth}
\centering
\includegraphics[width=\mywd]{ 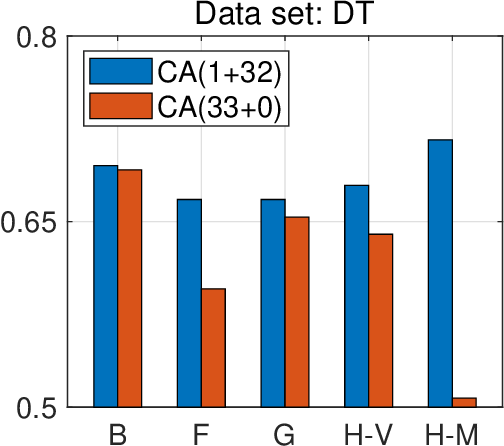}
\end{minipage}}
\subfigure{
\begin{minipage}{\mylwd\linewidth}
\centering
\includegraphics[width=\mywd]{ 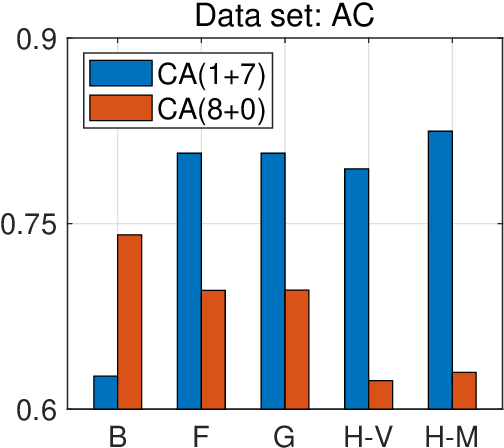}
\end{minipage}}
\subfigure{
\begin{minipage}{\mylwd\linewidth}
\centering
\includegraphics[width=\mywd]{ 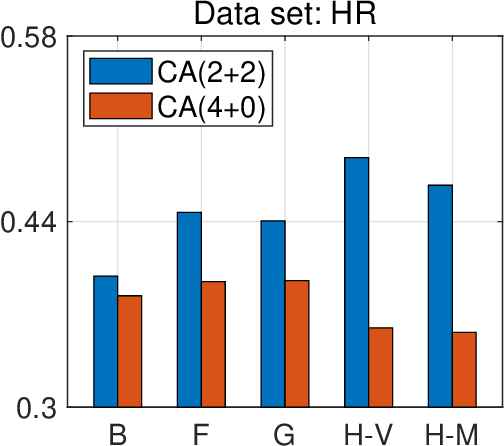}
\end{minipage}}
\subfigure{
\begin{minipage}{\mylwd\linewidth}
\centering
\includegraphics[width=\mywd]{ 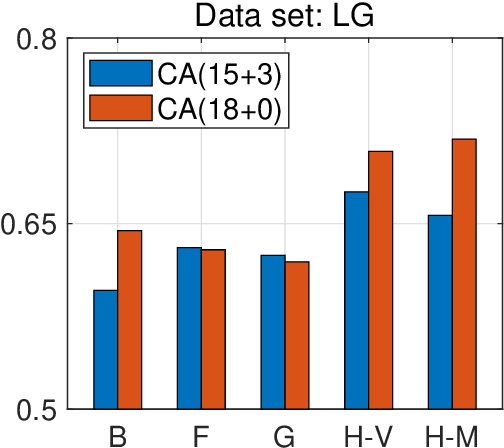}
\end{minipage}}
\caption{{ARI and CA performance of different approaches with and without distinguishing ordinal attributes on AP, DT, AC, HR, and LG datasets. The two numbers in parentheses in the legend report the value of $d_n$+$d_o$ used to treat the dataset.}}	
\label{fig:switch}	
\end{figure*}

\begin{figure*}[!t]
\newcommand{\mywd}{1.76in} 
\newcommand{\mylwd}{0.30} 
\centering
\subfigure{
    \begin{minipage}{\mylwd\linewidth}
        \centering
        \includegraphics[width=\mywd]{ 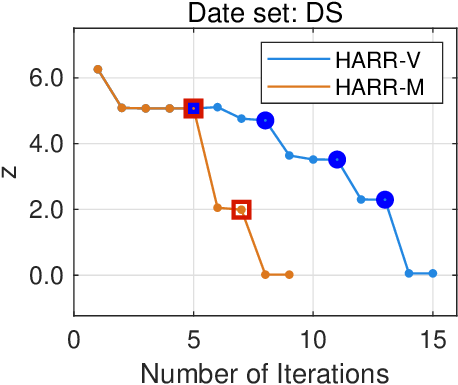}
    \end{minipage}
    \begin{minipage}{\mylwd\linewidth}
        \centering
        \includegraphics[width=\mywd]{ 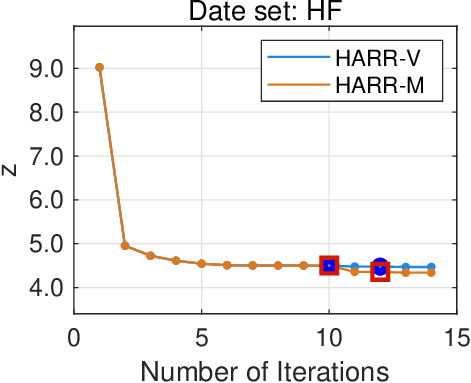}
    \end{minipage}
    \begin{minipage}{\mylwd\linewidth}
        \centering
        \includegraphics[width=\mywd]{ 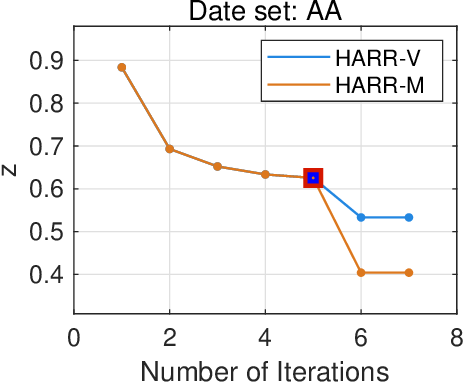}
    \end{minipage}
}
\subfigure{
    \begin{minipage}{\mylwd\linewidth}
        \centering
        \includegraphics[width=\mywd]{ 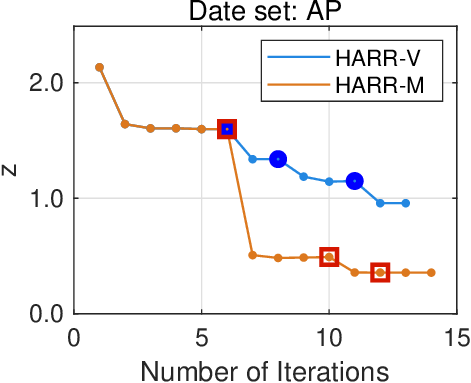}
    \end{minipage}
    \begin{minipage}{\mylwd\linewidth}
        \centering
        \includegraphics[width=\mywd]{ 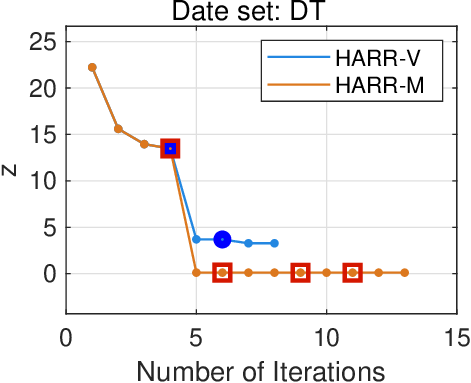}
    \end{minipage}
    \begin{minipage}{\mylwd\linewidth}
        \centering
        \includegraphics[width=\mywd]{ 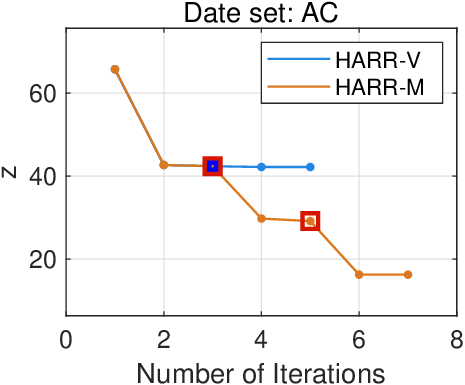}
    \end{minipage}
}
\caption{{Total object-cluster distance $z$ of HARR-V and HARR-M on different mixed data sets. The blue circles and red boxes indicate that the weights were updated by HARR-V and HARR-M, respectively.}}	
\label{fig:obj_val}	
\end{figure*}

\subsection{Efficiency Evaluation}
{
To specifically evaluate the efficiency of HARR-V and HARR-M, we compare their execution time with their state-of-the-art counterparts. To better observe the scalability and verify the time complexity, we compare the approaches on a large randomly generated synthetic dataset with $n = 10,0000$, $d= d^c = 5$, $v^1 = v^2 =\ldots= 5$ and $k=5$. All the compared approaches are implemented by first computing the similarity/distance matrices of each attribute and then performing clustering by reading off the similarities/distances. The time (y-axis) - $n$ (x-axis) curves in Figure~\ref{fig:time_n} are formed by sampling the synthetic dataset using the sampling rates $\varphi =\{0.001, 0.2, 0.4, 0.6, 0.8, 1\}$. The efficiency-related experiments are coded by MATLAB R2021b and implemented by a PC (Intel i7-9700 CPU @ 3.00 GHz, 16GB RAM ).}

It is intuitive that the execution time of HARR-V and HARR-M increases linearly, which verifies the time complexity analyzed in Theorem~\ref{the:timepp}. Moreover, the time curves of both methods exhibit nearly identical trends, further validating their practical performance and consistency with the theoretical expectations. This consistency demonstrates the scalability of the proposed methods, making them well-suited for handling large-scale datasets. In addition, it can be seen that even though HARR involves the weights learning procedure, its execution time is still shorter than that of CMS, UDM, HDM, and FBD. This is because the similarity computation cost dominates the execution time for all the compared approaches. {HARR directly adopts the CPDs’ differences as the base distance measure and is thus relatively more efficient than the similarity measurement strategies adopted by the other three metrics.}

\begin{figure}[!t]	
 
\centerline{\includegraphics[width=3.6in]{ 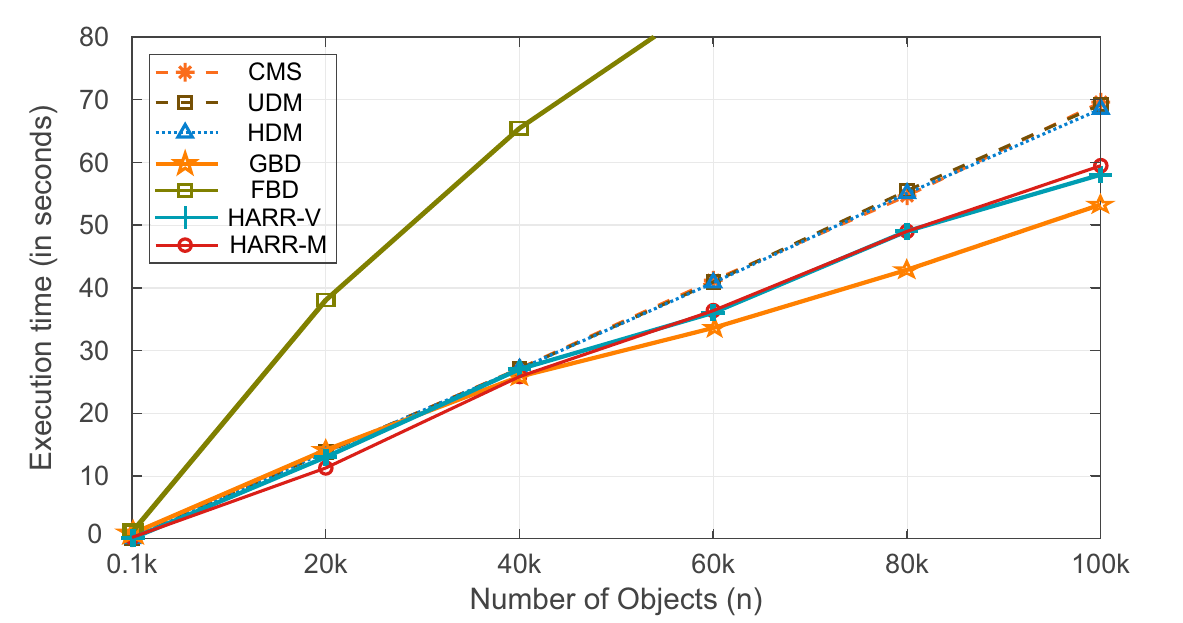}}  
\caption{{Comparison of the execution time of CMS, UDM, HDM, GBD, FBD, HARR-V, and HARR-M on a large synthetic dataset ($n = 100,000$) sampled at different sampling rates $\varphi =\{0.001, 0.2, 0.4, 0.6, 0.8, 1\}$.}}	
\label{fig:time_n}
\end{figure}

\subsection{Intuitive Study} \label{sct:vs}

{To showcase improvements of HARR-V and HARR-M and the intuitiveness of the distance structure they obtain, with a practical case. We use the distance between attribute values learned by OHE, GBD, FBD, HARR-V and HARR-M to encode the attributes of the MR dataset. The encoded data are then dimensionally reduced into a 2-D space through t-SNE~\cite{r1visual} and visualized in Figure \ref{fig:tsne} by marking the data points with `true' labels provided by the dataset in different colors. If more data points with the same label are gathered in the visualization, then it indicates that the corresponding distance metric is more competent in discriminating different clusters. }

{
It is evident from the figure that the cluster discrimination ability of the HARR-V and HARR-M methods is significantly better than that of OHE, GBD, and FBD. Specifically, compared to the methods shown in Figures~\ref{fig:tsne} (a)-(c), Figures~\ref{fig:tsne} (d) and (e) demonstrate that the distance structures learned by HARR-V and HARR-M enable the encoded MR dataset to exhibit more distinct cluster separation. This advantage stems from the projection mechanism employed by the HARR framework, which effectively preserves the information between categorical attributes. Additionally, by automating the assignment of attribute weights, the HARR methods adequately consider both inter-cluster and intra-cluster relationships, leading to more reasonable and discriminative representations. }

These improvements are particularly valuable in real-world scenarios where accurate and efficient clustering of heterogeneous data is critical, such as in market segmentation, healthcare diagnostics, or recommendation systems. In these domains, being able to distinguish clusters with more precision and interpretability can significantly enhance decision-making and result in more actionable insights. {For example, in market segmentation, the ability to capture subtle differences between customer groups based on categorical attributes can lead to more targeted marketing strategies.
}
\begin{figure}[h]	
\begin{minipage}{0.19\linewidth}	
  \centerline{\includegraphics[width=1.3in]{ 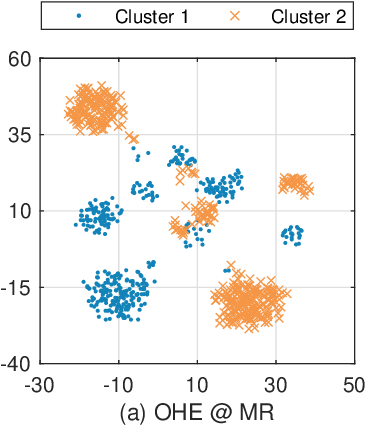}}
\end{minipage}	
\hfill	
\begin{minipage}{0.19\linewidth}	
  \centerline{\includegraphics[width=1.3in]{ 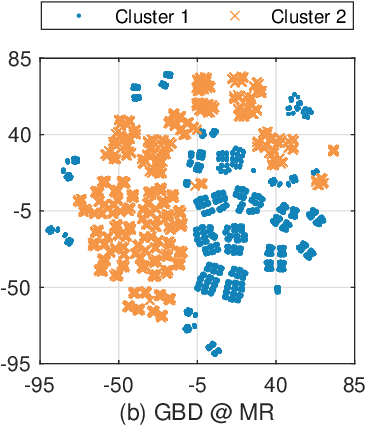}}
\end{minipage}
\hfill	
\begin{minipage}{0.19\linewidth}	
  \centerline{\includegraphics[width=1.3in]{ 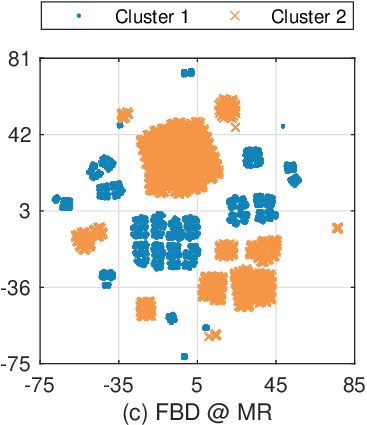}}
\end{minipage}
\hfill	
\begin{minipage}{0.19\linewidth}	
  \centerline{\includegraphics[width=1.3in]{ 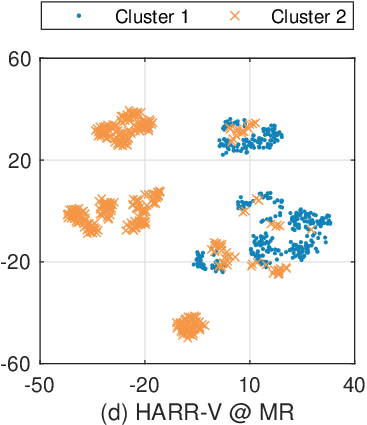}}
\end{minipage}
\hfill	
\begin{minipage}{0.19\linewidth}	
  \centerline{\includegraphics[width=1.3in]{ 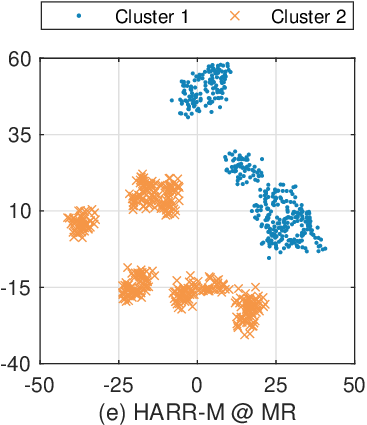}}
\end{minipage}
\caption{t-SNE visualization of the MR dataset represented by OHE, GBD, FBD, HARR-V, and HARR-M. {Data points are marked in different colors according to the `true' labels.} A higher concentration of data points with the same label indicates that the corresponding method's distance metric better discriminates between clusters.}
\label{fig:tsne}	
\end{figure}	

\section{Concluding Remarks}\label{sct:conclusion}

This paper proposes an attribute representation learning approach for solving the clustering problem of mixed data composed of heterogeneous attributes. Based on the analysis of the concepts described by the attributes, the values of each attribute are represented in homogeneous linear distance spaces. Such homogeneous representation provides an effective basis for the fusion of heterogeneous information from the different types of attributes in mixed data clustering. To make the representations adapt to clustering tasks, we propose a clustering paradigm to simultaneously search the weights of represented attributes and partitions of data objects. {The designed learning mechanisms can effectively circumvent sub-optimal solutions to a certain extent and demonstrate superior} clustering performance on both categorical data and more challenging mixed data. Moreover, the proposed clustering algorithms converge quickly and do not involve non-trivial parameter settings.

Despite the effectiveness of our method on static datasets, certain limitations remain. Specifically, our method tends to underperform when applied to datasets containing missing or noisy values. Additionally, in the context of complex and dynamic environments involving streaming data, {the  multiple spaces} projection mechanism employed by our approach may struggle to maintain its effectiveness. Addressing these issues will be a key focus of our future research. In particular, incorporating mechanisms for handling noisy or incomplete data, as well as developing incremental metric fusion techniques for streaming data under dynamic conditions, {are} promising directions for extending and improving our method.

\section*{Acknowledgements}
This work was supported in part by the National Natural Science Foundation of China (NSFC) under grants: 62476063, 62376233, and 62102097, the NSFC/Research Grants Council (RGC) Joint Research Scheme under the grant N\_HKBU214/21, the Natural Science Foundation of Guangdong Province under grant 2023A1515012855, the General Research Fund of RGC under grants: 12201321, 12202622, and 12201323, and the RGC Senior Research Fellow Scheme under grant SRFS2324-2S02.

\bibliographystyle{elsarticle-num-names}
\bibliography{LT_Learn_Rep_Mix_Clustering}

@book{intro1,
  title={Categorical data analysis},
  author={Agresti, Alan},
  year={2003},
  publisher={John Wiley \& Sons}
}

@article{intro5,
  title={Data clustering: A review},
  author={Jain, Anil K and Murty, M Narasimha and Flynn, Patrick J},
  journal={ACM Computing Surveys},
  volume={31},
  number={3},
  pages={264--323},
  year={1999},
  publisher={ACM}
}

@inproceedings{intro6,
  title={Similarity measures for categorical data: A comparative evaluation},
  author={Boriah, Shyam and Chandola, Varun and Kumar, Vipin},
  booktitle={Proceedings of the 2008 SIAM International Conference on Data Mining},
  pages={243--254},
  year={2008},
}

@book{intro9,
  title={Information theory and statistics},
  author={Kullback, Solomon},
  year={1997},
  publisher={Courier Corporation}
}

@article{intro10,
  title={Categorical data clustering: What similarity measure to recommend?},
  author={dos Santos, Tiago RL and Z{\'a}rate, Luis E},
  journal={Expert Systems with Applications},
  volume={42},
  number={3},
  pages={1247--1260},
  year={2015},
  publisher={Elsevier}
}

@article{kms,
  title={A clustering technique for summarizing multivariate data},
  author={Ball, Geoffrey H and Hall, David J},
  journal={Behavioral Science},
  volume={12},
  number={2},
  pages={153--155},
  year={1967},
  publisher={Wiley Online Library}
}

@article{kmd,
  title={Extensions to the k-means algorithm for clustering large data sets with categorical values},
  author={Huang, Zhexue},
  journal={Data Mining and Knowledge Discovery},
  volume={2},
  number={3},
  pages={283--304},
  year={1998},
  publisher={Springer}
}

@inproceedings{kpt,
title={Clustering large data sets with mixed numeric and categorical values},
author={Huang, Zhexue},
booktitle={Proceedings of the First Pacific-Asia Conference on Knowledge Discovery and Data Mining},
pages={21--34},
year={1997},
}

@article{wkm,
  title={Automated variable weighting in k-means type clustering},
  author={Huang, Joshua Zhexue and Ng, Michael K and Rong, Hongqiang and Li, Zichen},
  journal={IEEE Transactions on Pattern Analysis and Machine Intelligence},
  volume={27},
  number={5},
  pages={657--668},
  year={2005},
  publisher={IEEE}
}

@article{ewkm,
  title={An entropy weighting k-means algorithm for subspace clustering of high-dimensional sparse data},
  author={Jing, Liping and Ng, Michael K and Huang, Joshua Zhexue},
  journal={IEEE Transactions on Knowledge and Data Engineering},
  volume={19},
  number={8},
  year={2007},
  publisher={IEEE}
}

@article{mwkm,
  title={A novel attribute weighting algorithm for clustering high-dimensional categorical data},
  author={Bai, Liang and Liang, Jiye and Dang, Chuangyin and Cao, Fuyuan},
  journal={Pattern Recognition},
  volume={44},
  number={12},
  pages={2843--2861},
  year={2011},
  publisher={Elsevier}
}

@article{inikms,
  title={How much can k-means be improved by using better initialization and repeats?},
  author={Fr{\"a}nti, Pasi and Sieranoja, Sami},
  journal={Pattern Recognition},
  volume={93},
  pages={95--112},
  year={2019},
  publisher={Elsevier}
}

@article{oc,
  title={Categorical-and-numerical-attribute data clustering based on a unified similarity metric without knowing cluster number},
  author={Cheung, Yiu-ming and Jia, Hong},
  journal={Pattern Recognition},
  volume={46},
  number={8},
  pages={2228--2238},
  year={2013},
  publisher={Elsevier}
}

@article{scc,
  title={Soft subspace clustering of categorical data with probabilistic distance},
  author={Chen, Lifei and Wang, Shengrui and Wang, Kaijun and Zhu, Jianping},
  journal={Pattern Recognition},
  volume={51},
  pages={322--332},
  year={2016},
  publisher={Elsevier}
}

@article{woc,
  title={Subspace clustering of categorical and numerical data with an unknown number of clusters},
  author={Jia, Hong and Cheung, Yiu-ming},
  journal={IEEE Transactions on Neural Networks and Learning Systems},
  volume={29},
  number={8},
  pages={3308--3325},
  year={2018},
  publisher={IEEE}
}

@inproceedings{dlc,
  title={An ordinal data clustering algorithm with automated distance learning},
  author={Zhang, Yiqun and Cheung, Yiu-ming},
  booktitle={Proceedings of the 34th AAAI Conference on Artificial Intelligence},
  pages={6869--6876},
  year={2020},
}

@article{gsm,
  title={A new similarity index based on probability},
  author={Goodall, David W},
  journal={Biometrics},
  pages={882--907},
  year={1966},
  publisher={JSTOR}
}

@article{hdm,
  title={Studies in classification, data Analysis, and knowledge organization},
  author={Arabie, Ph and Baier, Newark D and Critchley, Cottbus F and Keynes, Milton},
  year={2006},
  publisher={Springer}
}

@inproceedings{lsm,
  title={An information-theoretic definition of similarity},
  author={Lin, Dekang},
  booktitle={Proceedings of the 15th International Conference on Machine Learning},
  pages={296--304},
  year={1998}
}

@article{abdm,
  title={An association-based dissimilarity measure for categorical data},
  author={Le, Si Quang and Ho, Tu Bao},
  journal={Pattern Recognition Letters},
  volume={26},
  number={16},
  pages={2549--2557},
  year={2005},
  publisher={Elsevier}
}

@article{adm,
  title={A method to compute distance between two categorical values of same attribute in unsupervised learning for categorical data set},
  author={Ahmad, Amir and Dey, Lipika},
  journal={Pattern Recognition Letters},
  volume={28},
  number={1},
  pages={110--118},
  year={2007},
}

@inproceedings{cbdm_conf,
  title={Context-based distance learning for categorical data clustering},
  author={Ienco, Dino and Pensa, Ruggero G and Meo, Rosa},
  booktitle={Proceedings of the Eighth International Symposium on Intelligent Data Analysis},
  pages={83--94},
  year={2009}
}

@article{cbdm_journal,
  title={From context to distance: Learning dissimilarity for categorical data clustering},
  author={Ienco, Dino and Pensa, Ruggero G and Meo, Rosa},
  journal={ACM Transactions on Knowledge Discovery from Data},
  volume={6},
  number={1},
  pages={1--25},
  year={2012},
  publisher={ACM}
}

@article{jdm,
  title={A new distance metric for unsupervised learning of categorical data},
  author={Jia, Hong and Cheung, Yiu-ming and Liu, Jiming},
  journal={IEEE Transactions on Neural Networks and Learning Systems},
  volume={27},
  number={5},
  pages={1065--1079},
  year={2016},
  publisher={IEEE}
}

@inproceedings{ebdmconf,
  title={Exploiting order information embedded in ordered categories for ordinal data clustering},
  author={Zhang, Yiqun and Cheung, Yiu-ming},
  booktitle={Proceedings of the 24th International Symposium on Methodologies for Intelligent Systems},
  pages={247--257},
  year={2018}
}

@article{ebdmjournal,
  title={A unified entropy-based distance metric for ordinal-and-nominal-attribute data clustering},
  author={Zhang, Yiqun and Cheung, Yiu-ming and Tan, Kaychen},
  journal={IEEE Transactions on Neural Networks and Learning Systems},
  volume={31},
  number={1},
  pages={39--52},
  year={2020},
  publisher={IEEE}
}

@article{udm,
  title={A new distance metric exploiting heterogeneous inter-attribute relationship for ordinal-and-nominal-attribute data clustering},
  author={Zhang, Yiqun and Cheung, Yiu-ming},
  journal={IEEE Transactions on Cybernetics},
  volume={52},
  number={2},
  pages={758--771},
  year={2022},
  publisher={IEEE}
}

@article{cms,
  title={Unsupervised coupled metric similarity for non-IID categorical data},
  author={Jian, Songlei and Cao, Longbing and Lu, Kai and Gao, Hang},
  journal={IEEE Transactions on Knowledge and Data Engineering},
  volume={30},
  number={9},
  pages={1810--1823},
  year={2018},
  publisher={IEEE}
}

@article{sbc,
  title={Space structure and clustering of categorical data},
  author={Qian, Yuhua and Li, Feijiang and Liang, Jiye and Liu, Bing and Dang, Chuangyin},
  journal={IEEE Transactions on Neural Networks and Learning Systems},
  volume={27},
  number={10},
  pages={2047--2059},
  year={2015},
  publisher={IEEE}
}

@inproceedings{cde_conf,
  title={Embedding-based representation of categorical data by hierarchical value coupling learning},
  author={Jian, Songlei and Cao, Longbing and Pang, Guansong and Lu, Kai and Gao, Hang},
  booktitle={Proceedings of the 26th International Joint Conference on Artificial Intelligence},
  year={2017}
}

@article{cde,
  title={CURE: Flexible categorical data representation by hierarchical coupling learning},
  author={Jian, Songlei and Pang, Guansong and Cao, Longbing and Lu, Kai and Gao, Hang},
  journal={IEEE Transactions on Knowledge and Data Engineering},
  volume={31},
  number={5},
  pages={853--866},
  year={2018},
  publisher={IEEE}
}

@inproceedings{mai,
  title={Metric-based auto-instructor for learning mixed data representation},
  author={Jian, Songlei and Hu, Liang and Cao, Longbing and Lu, Kai},
  booktitle={Proceedings of the 32nd AAAI Conference on Artificial Intelligence},
  pages={3318--3325},
  year={2018}
}

@article{untie,
  title={Unsupervised heterogeneous coupling learning for categorical representation},
  author={Zhu, Chengzhang and Cao, Longbing and Yin, Jianping},
  journal={IEEE Transactions on Pattern Analysis and Machine Intelligence},
  volume={44},
  number={1},
  pages={553--549},
  year={2022},
  publisher={IEEE}
}

@inproceedings{mix2vec,
  title={Mix2Vec: Unsupervised Mixed Data Representation},
  author={Zhu, Chengzhang and Zhang, Qi and Cao, Longbing and Abrahamyan, Arman},
  booktitle={Proceedings of the seventh International Conference on Data Science and Advanced Analytics},
  pages={118--127},
  year={2020},
}

@article{hd-ndw,
  title={Learnable Weighting of Intra-attribute Distances for Categorical Data Clustering with Nominal and Ordinal Attributes},
  author={Zhang, Yiqun and Cheung, Yiu-ming},
  journal={IEEE Transactions on Pattern Analysis and Machine Intelligence},
  volume={44},
  number={7},
  pages={3560--3576},
  year={2022},
  publisher={IEEE}
}

@inproceedings{het2hom,
  title={Het2Hom: Representation of heterogeneous attributes into homogeneous concept spaces for categorical-and-numerical-attribute data clustering},
  author={Zhang, Yiqun and Cheung, Yiu-ming and Zeng, An},
  booktitle={Proceedings of the 31st International Joint Conference on Artificial Intelligence},
  pages={3758--3765},
  year={2022}
}

@article{sig_clust,
  title={Significance-Based Categorical Data Clustering},
  author={Hu, Lianyu and Jiang, Mudi and Liu, Yan and He, Zengyou},
  journal={arXiv preprint arXiv:2211.03956},
  year={2022}
}

@misc{uci,
author = {Dua, Dheeru and Karra Taniskidou, Efi},
year = {2017},
title = {{UCI} Machine Learning Repository},
url = {http://archive.ics.uci.edu/ml},
institution = {University of California, Irvine, School of Information and Computer Sciences}
}

@inproceedings{ex1,
  title={Laplacian score for feature selection},
  author={He, Xiaofei and Cai, Deng and Niyogi, Partha},
  booktitle={Proceedings of the 18th International Conference on Neural Information Processing Systems},
  pages={507--514},
  year={2005}
}

@inproceedings{ex2,
  title={On the use of the adjusted rand index as a metric for evaluating supervised classification},
  author={Santos, Jorge M and Embrechts, Mark},
  booktitle={Proceedings of the 19th International Conference on Artificial Neural Networks},
  pages={175--184},
  year={2009},
  organization={Springer}
}

@article{ex3,
  title={Objective criteria for the evaluation of clustering methods},
  author={Rand, William M},
  journal={Journal of the American Statistical Association},
  volume={66},
  number={336},
  pages={846--850},
  year={1971},
  publisher={Taylor \& Francis Group}
}

@article{ex4,
  title={The impact of random models on clustering similarity},
  author={Gates, Alexander J and Ahn, Yong-Yeol},
  journal={The Journal of Machine Learning Research},
  volume={18},
  number={1},
  pages={3049--3076},
  year={2017},
  publisher={JMLR. org}
}

@article{ex8,
  title={Fusing monotonic decision trees},
  author={Qian, Yuhua and Xu, Hang and Liang, Jiye and Liu, Bing and Wang, Jieting},
  journal={IEEE Transactions on Knowledge and Data Engineering},
  volume={27},
  number={10},
  pages={2717--2728},
  year={2015},
  publisher={IEEE}
}

@article{r1test,
  title={Statistical comparisons of classifiers over multiple data sets},
  author={Dem{\v{s}}ar, Janez},
  journal={Journal of Machine Learning Research},
  volume={7},
  number={1},
  pages={1--30},
  year={2006}
}

@article{r1visual,
  title={Visualizing data using t-SNE},
  author={Maaten, Laurens van der and Hinton, Geoffrey},
  journal={Journal of Machine Learning Research},
  volume={9},
  number={11},
  pages={2579--2605},
  year={2008}
}

@article{ML23KBS1,
title = {A Multi-View Deep Metric Learning approach for Categorical Representation on mixed data},
author = {Qiude Li and Shengfen Ji and Sigui Hu and Yang Yu and Sen Chen and Qingyu Xiong and Zhu Zeng},
journal = {Knowledge-Based Systems},
volume = {260},
pages = {110161},
year = {2023}
}

@article{YU2022KBS4,
title = {Multi-view distance metric learning via independent and shared feature subspace with applications to face and forest fire recognition, and remote sensing classification},
journal = {Knowledge-Based Systems},
author = {Yifan Yu and Liyong Fu and Yawen Cheng and Qiaolin Ye},
volume = {243},
pages = {108350},
year = {2022}
}

@article{KHAN23KBS5,
title = {An entropy-based weighted dissimilarity metric for numerical data clustering using the distribution of intra feature differences},
author = {Abdul Atif Khan and Amaresh Chandra Mishra and Sraban Kumar Mohanty},
journal = {Knowledge-Based Systems},
volume = {280},
pages = {110967},
year = {2023}

}

@ARTICLE{Adc,
  author={Zhang,Yiqun and Cheung,Yiu-Ming },
  journal={IEEE Transactions on Neural Networks and Learning Systems}, 
  title={Graph-Based Dissimilarity Measurement for Cluster Analysis of Any-Type-Attributed Data}, 
  year={2023},
  volume={34},
  number={9},
  pages={6530-6544},
 }

@inproceedings{COForest, 
  title={Learning Order Forest for Qualitative-Attribute Data Clustering}, 
  author={Zhao, Mingjie and Feng, Sen  and Zhang, Yiqun and Li, Mengke and Lu, Yang and Cheung, Yiu-Ming}, 
  booktitle={Proceedings of the 27th European Conference on Artificial Intelligence},
  year={2024}
}

@article{Gower,
 author = {J. C. Gower},
 journal = {Biometrics},
 number = {4},
 pages = {857--871},
 title = {A General Coefficient of Similarity and Some of Its Properties},
 urldate = {2024-11-30},
 volume = {27},
 year = {1971}
}

@article{friedman,
  title={The use of ranks to avoid the assumption of normality implicit in the analysis of variance},
  author={Friedman, Milton},
  journal={Journal of the american statistical association},
  volume={32},
  number={200},
  pages={675--701},
  year={1937},
  publisher={Taylor \& Francis}
}

\end{document}